\documentclass[letterpaper]{article}

\usepackage[utf8]{inputenc}
\usepackage[T1]{fontenc}
\usepackage[letterpaper,margin=1in]{geometry}
\usepackage{microtype}
\usepackage[hyphens]{url}
\usepackage{graphicx}
\usepackage{caption}

\usepackage{amsmath}
\usepackage{amssymb}
\usepackage{amsthm}
\usepackage{mathtools}
\usepackage{tikz}
\usetikzlibrary{positioning,arrows.meta}

\usepackage{natbib}
\usepackage{hyperref}
\hypersetup{colorlinks=true,linkcolor={black!70!blue},citecolor={black!70!blue},urlcolor={black!70!blue}}
\usepackage[capitalize,noabbrev]{cleveref}

\newcommand{\RR}{ \mathbb{R} }
\newcommand{\Set}[1]{\left\{ #1 \right\}}
\newcommand{\inner}[2]{\left< #1 , #2 \right>}
\newcommand{\Exp}[1]{ \mathbb{E} #1}
\newcommand{\norm}[1]{\left\|#1\right\|}
\newcommand{\eps}{\varepsilon}
\newcommand{\cond}{\mid}
\DeclareMathOperator*{\argmax}{arg\,max}

\newtheorem{theorem}{Theorem}[section]

\newtheorem{definition}[theorem]{Definition}
\newtheorem{lemma}[theorem]{Lemma}

\newtheorem{remark}[theorem]{Remark}
\newtheorem{proposition}[theorem]{Proposition}

\newlength{\figwidth}
\newcommand{\Vphi}{V_{\phi}}
\newcommand{\Vphip}{V'_{\phi}}
\newcommand{\HS}{\mathrm{HS}}

\newcommand{\mcH}{\mathcal{H}}
\newcommand{\mcX}{\mathcal{X}}
\newcommand{\mcY}{\mathcal{Y}}
\newcommand{\mcE}{\mathcal{E}}

\newcommand{\mcZ}{\mathcal{Z}}
\newcommand{\mcN}{\mathcal{N}}
\newcommand{\mcG}{\mathcal{G}}

\newcommand{\dS}{d_{S}}
\newcommand{\dSb}{d_{S^\ast}}
\newcommand{\dJ}{d_{J}}
\newcommand{\Jf}{J_{\mathrm f}}
\newcommand{\Jb}{J_{\mathrm b}}

\title{Score the Algebra, Not the Span:\\Dimension Reduction for Transfer Operator Models of Dynamical Systems}
\author{
  Mark Kozdoba$^{1}$\\
  Technion, IIT
  \and
  Shie Mannor\\
  Technion, IIT and NVIDIA
}
\date{}

\begin{document}

\maketitle
\footnotetext[1]{\texttt{markk@technion.ac.il}}

\begin{abstract}
Dimension reduction for dynamical systems is standard practice, and the standard
route is spectral: model the transfer (Koopman) operator by its leading modes. We show
that on systems assembled from several weakly interacting components --- a structure common
in physical and biological settings --- this may either require an exponential number of
modes, or drop an entire component: the component is absent from the model rather than
modeled coarsely, and no function of it can be predicted at any accuracy. We call this
\emph{linear masking}.

The cause is that a rank-based model pays one coordinate per mode. We propose to score
instead the $\sigma$-algebra the coordinates generate, so that products and powers come
free and a component's cost is governed only by its generators rather than by all its
interactions. The criterion is a
$\chi^2$-divergence between the embedded present and future, and it carries a budget
guarantee: twice the intrinsic dimension of the dynamics is enough coordinates for an
embedding whose algebra carries the operator's entire spectrum, with its full infinite rank. 

In variational form the criterion admits off-the-shelf estimators, and restricting its
critic to the bilinear class returns the VAMP score on the span, so rank-based methods are
one end of the same family. 
We demonstrate the proposed objective on a composite of published benchmark systems. We exhibit examples where the rank-based methods completely miss the
masked components at all ranks $k<100$, while ten algebra coordinates recover all of
them. In addition, the resulting algebra representation supports predicting the masked
components from few labels, while direct regression from the high-dimensional
observation or from  the VAMP features fail.

\end{abstract}

\section{Introduction}
\label{sec:intro}

Learning a model of a dynamical system from observed trajectories is a basic machine learning
task. Instead of modeling the dynamics on the state space $\mcX$ directly, one standard approach
is to work with the linear operator $T$ that the dynamics induces on the space of functions $f$
over $\mcX$, carrying a function of the future state to its conditional expectation given the
present. In deterministic systems this is known as the Koopman operator
\citep{brunton2022modern}, and it is simply the transition operator in the
Markov setting \citep{meyn2009markov}.

For many physical systems the eigenfunctions of $T$ are known in closed form, and serve as the
basis of the analysis of the dynamics. However, when the system
is given only through sampled trajectories, the eigenfunctions have to be learned, and 
the standard target to be approximated is
a finite-dimensional invariant subspace. For instance, the extended dynamic mode decomposition (EDMD) fits one inside a
chosen dictionary \citep{williams2015data}, 
while in the stochastic and non-reversible setting,
one of the most commonly used objectives for learning
the leading $k$-dimensional singular subspace is
VAMP \citep{wu2020variational}. Its neural form, VAMPnets
\citep{mardt2018vampnets}, learns $k$ functions that realize the best rank-$k$
approximation to $T$.

\paragraph{Multicomponent systems and linear masking.}
Many systems of interest are multicomponent by nature, and biological ones especially so. For instance, a
biomolecular complex is well known to decompose into domains whose internal kinetics are fast
and whose mutual coupling is weak. This near-product structure is then put to use: decomposing and modeling such
systems domain by domain rather than jointly is an established strategy
\citep{hempel2021independent,mardt2022deep}.
However,
finding such a decomposition is itself hard. It presupposes both the number of components and the number of
states allotted to each, and it applies only where those components are uncoupled or weakly coupled, a restriction that is stated explicitly in the literature \citep{mardt2022deep}.  

Here we ask instead whether such systems can be modeled directly, without the need to decompose the system. 

As mentioned above, a common approach to direct modeling of dynamical systems is by the
rank-$k$ approximation of the transfer operator $T$. However, the product structure above may severely limit the approximability of a system by a low-rank operator.
The reason is a counting one: such a model spends one coordinate on each singular
function it retains. When the
components are independent, the singular functions of the joint operator are products of the singular functions of the components, and similarly the singular values are products of the components' singular values. This implies in turn that the number of
modes above a given level may grow exponentially in the number of components.

We note that the most problematic manifestation of this phenomenon happens
when there are spectral gaps between the components themselves. For instance, if there
are a few slow components (large singular values), and a faster component (smaller leading singular value), then many products of the slow components' values will be higher than the leading value of the fast component.
In \Cref{sec:experiments} we provide an example of a system with eight components, in
which a rank $k=100$ representation contains no mode involving \emph{six} of them, and
therefore carries no information about those six at all --- they are not approximated
coarsely, they are absent, while each of them is individually predictable. This is
striking, since the whole system is intrinsically ten-dimensional, and one might
expect a $100$-dimensional representation to carry all the information about it.
We call this phenomenon \emph{linear masking}.

\paragraph{Approximating by an algebra.}
The underlying reason for linear masking is that rank based models must assign 
a separate coordinate to every nonlinear interaction between existing singular
functions (such as their products), even though such an interaction is already
computable from the coordinates the model holds. We therefore ask for an approximation
that is credited for everything computable from its coordinates, rather than one that
must buy each combination separately. 

We formalize this intuition by looking for an embedding
$\phi=(\phi_1,\ldots,\phi_m)$ that generates the largest $\sigma$-algebra, as measured
by the amount of the spectrum of $T$ it captures.
That is, consider the space $V_{\phi}$ of functions of $x$ computable from the
representation $\phi(x)$ alone. Then, roughly speaking, we are interested in the
$\phi$ for which the restriction of $T$ to $V_{\phi}$ has the largest
Hilbert--Schmidt norm.
Although $\phi$ is finite dimensional, the space $V_{\phi}$ is typically not, and it
contains all the interactions between the coordinates $\phi_i$.
For comparison, replacing $V_{\phi}$ by the span
$U_{\phi}=\operatorname{span}\Set{\phi_1,\ldots,\phi_m}$ recovers the rank-$m$
objective of VAMPnets.

In \Cref{thm:diffusion-equivalence} we show that a finite budget always suffices: the
number of coordinates whose algebra carries the whole of $T$ is bounded by twice the
\emph{intrinsic dimension} of the dynamics, a quantity that agrees with the number of
degrees of freedom of the system in the cases of interest.
The cost of a faithful representation is thus set by the
degrees of freedom, rather than by the number of significant modes,
which is what avoids the masking pathology described above.

It is also worth noting that a wide literature reads a dimension off a knee in the
spectrum of $T$
\citep{noe2013variational,dsilva2018parsimonious,vonlindheim2018intrinsic,talmon2015intrinsic}.
Such spectrum-based notions of dimension can be arbitrarily large on a system with
few degrees of freedom, as the example above shows.

\paragraph{Practical modeling.}
While optimizing $\phi$ to score the algebra may \emph{a priori} appear complex, the
restricted Hilbert--Schmidt norm above is in fact exactly the
$\chi^2$ dependence between the embedded present and future. Both it and the map $\phi$
attaining it can therefore be computed with off-the-shelf $\chi^2$
estimators \citep{sugiyama2012density,kanamori2009least,nguyen2010estimating}.

\paragraph{Experiments.} We consider a ten-dimensional system with eight
components assembled from published benchmark systems, some slow and some fast, and a
parameter $\kappa$ controlling the amount of dependence between them. Its trajectories
are observed either through the native ten-dimensional state, or through a
high-dimensional nonlinear warp, as is more common in practice.

We demonstrate empirically the phenomena that the theory predicts.
VAMPnets trained to convergence recover nothing of the six fast components at any rank
below the product count of about one hundred, although each is individually
predictable from the observations. This holds at every setting of $\kappa$ and under
both observation models. Maximizing the algebra criterion over ten
coordinates, one per degree of freedom, recovers every component in all of these
settings.

The value of a learned dimension reduction often lies in the label budget: trained on
abundant \emph{unlabeled} transition pairs, a representation can support predicting the
masked components from few labels. We show that the algebra objective produces
representations where this is possible, while direct regression from the
observations and rank-based representations both fail.

\section{Related Work}
\label{sec:related}

The standard low-rank program scores the linear \emph{span} of its coordinates, and so
pays one coordinate per significant mode. The
best rank-$k$ span is the leading singular subspace of the transfer operator, which for
a composition operator is the Koopman operator of the dynamics. Taking those leading
functions as coordinates is the program of Koopman operator theory
\citep{brunton2022modern}, realized empirically by dynamic mode decomposition and its
extended and kernel variants \citep{williams2015data,williams2015kernel}. In the
stochastic, non-reversible setting the variational score for that subspace is VAMP
\citep{wu2020variational}, and its neural form, scoring the span of a learned
$k$-dimensional feature map, is VAMPnets \citep{mardt2018vampnets}. The current neural
mode learners target the same top-$k$ subspace
\citep{jeong2025efficient,kostic2024learning,deng2022neuralef,pfau2019spectral}, as does
the kernel and RKHS operator-regression line
\citep{klus2020eigendecompositions,kostic2022learning}, where the rank-$k$ truncation is
replaced by a restriction on the RKHS norm. On a multicomponent system all of them
over-count the intrinsic dimension by an unbounded factor, since the modes they enumerate
are products and harmonics of a few generators. VAMPnet is our primary benchmark
(\Cref{sec:experiments}).

Closest in machinery,
\citet{turri2025self} train a feature map through the same least-squares density-ratio
functional we estimate with, with a \emph{bilinear} critic whose
optimal value they identify with the VAMP-2 score --- the span once more, now reached
variationally. Our one move is to leave that critic unrestricted, which scores the
algebra the coordinates generate and turns the mode count into an intrinsic dimension
(\Cref{subsec:objective-connections}).

\citet{bittracher2018transition} likewise seek a mapping that preserves the dominant
spectral subspace, and reach it through the embedding machinery our
\Cref{thm:diffusion-equivalence} also uses, over the same $\chi^2$ geometry of predictive
laws. Their guarantee is a dominant-mode one by construction: the recovered
eigenfunctions carry an error bounded by $\eps/|\lambda_i|$, which diverges as
$|\lambda_i|\to0$, so it is silent on the fast modes a rank budget discards, and they
assume reversibility, which we do not. Computational mechanics reaches for the same
object conceptually: causal states are the classes of a predictive-equivalence
$\sigma$-algebra \citep{shalizi2001computational}, constructed in practice through kernel
mean embeddings of the predictive laws \citep{brodu2020discovering}, and so anchored to
the kernel's view of the observed state.

Additional literature notes, on component identification and on autoencoder
methods, are in \Cref{sec:additional-lit}.

\section{Preliminaries}
\label{sec:prelim}

\paragraph{The coupling.} We work with a single state space $\mcX$ carrying a
probability measure $\mu$ together with a Markov kernel $p(\cdot\cond x)$, the one-step
law of the future given the present. Let $X\sim\mu$ be the present state,
$X'\sim p(\cdot\cond X)$ the coupled future, and $\mu'$ the induced future marginal,
$d\mu'(x')=\int p(x'\cond x)\,d\mu(x)$. We assume neither stationarity ($\mu'$ may differ
from $\mu$) nor reversibility. The pair $(X,X')$ has joint law $\bar\mu$ on
$\mcX\times\mcX$, and this joint law is the sole input --- every object below is a
functional of $\bar\mu$ alone. A \emph{coupling} is thus any pair of random variables,
with the present/future reading as its running instance.

\paragraph{The transfer operator.} The coupling is carried by its
conditional-expectation (transfer) operator $T:L^2(\mcX,\mu')\to L^2(\mcX,\mu)$,
\begin{equation}
\label{eq:cond-operator}
  (Tf)(x)=\Exp{f(X')\cond X=x},
\end{equation}
which sends a future observable to its best present prediction. We also write $T=T_{X,X'}$, when the underlying coupling needs to be specified explicitly. The adjoint of $T$, 
$T^\ast:L^2(\mcX,\mu)\to L^2(\mcX,\mu')$ is the backward conditional expectation
$(T^\ast g)(x')=\Exp{g(X)\cond X'=x'}$, and constants are fixed,
$T\mathbf 1=\mathbf 1=T^\ast\mathbf 1$.

\paragraph{Regularity and the singular system.} The single standing assumption in this paper is that
$T$ is \emph{Hilbert--Schmidt}, $\operatorname{tr}(T^\ast T)<\infty$. This is also the standing
assumption of the variational Koopman literature, where it underwrites the singular value
decomposition and likewise excludes deterministic dynamics \citep{wu2020variational}. A
Hilbert--Schmidt operator is compact, so $T$ admits a singular value decomposition
$(\sigma_j,\phi_j,\psi_j)_{j\ge0}$ \citep{hsing2015theoretical},
\begin{equation}
\label{eq:svd}
  T\psi_j=\sigma_j\phi_j,\qquad T^\ast\phi_j=\sigma_j\psi_j,
\end{equation}
with singular values $1=\sigma_0\ge\sigma_1\ge\cdots\to0$, present functions
$\{\phi_j\}$ orthonormal in $L^2(\mu)$, future functions $\{\psi_j\}$ orthonormal in
$L^2(\mu')$, and the trivial leading triple $\sigma_0=1$, $\phi_0=\psi_0=\mathbf 1$.
The trace is the sum of the squared singular values, and is the squared
Hilbert--Schmidt norm:
$\operatorname{tr}(T^\ast T)=\norm{T}_{\HS}^2=\sum_{j\ge0}\sigma_j^2$.

In addition, the trace is related to the $\chi^2$ dependence between $X$ and $X'$ ---
the $\chi^2$ divergence of the joint law from the independent product of the
marginals, written $\chi^2(X;X')$. Let $P$ be the joint law of $(X,X')$ and
$r=dP/d(\mu\otimes\mu')$ its density ratio against the product of the marginals.
Then the transfer operator
$T=T_{X,X'}$ is the integral operator with kernel $r$, $(Tf)(x)=\int r(x,x')f(x')\,d\mu'(x')$ and we have
\begin{equation}
\label{eq:hs-chi2-prelim}
\begin{aligned}
  \norm{T}_{\HS}^2 &=\int r^2\,d(\mu\otimes\mu')=1+\chi^2(X;X').
\end{aligned}
\end{equation}
We thus use $\norm{T}_{\HS}^2$ and $1+\chi^2(X;X')$ interchangeably throughout the
paper.

\section{Framework and Results}
\label{sec:framework}

\subsection{General Embedding Results}
\label{subsec:general-embedding}

\paragraph{Induced couplings.} Let $\phi:\mcX\to\mcY$ be a measurable feature map. With $Y=\phi(X)$ and $Y'=\phi(X')$ the pair $(Y,Y')$ is again a coupling, the
one \emph{induced} by $\phi$, and it carries its own transfer operator $T_{Y,Y'}$.

\paragraph{The algebra objective.} We will be interested in embeddings $\phi$ that
maximize the size of the induced operator, $T_{\phi}:=T_{Y,Y'}$. By
\eqref{eq:hs-chi2-prelim}, applied to the induced coupling,
\begin{equation}
\label{eq:objective}
  \norm{T_{\phi}}_{\HS}^2
  \;=\; 1+\chi^2\!\big(\phi(X);\phi(X')\big),
\end{equation}
the predictive $\chi^2$ dependence --- the total principal inertia --- of the embedded
present and future. We call \eqref{eq:objective} the \emph{algebra objective} because
its value depends on $\phi$ only through the $\sigma$-algebra $\sigma(\phi)$ the
coordinates generate:\footnote{A function $g$ is $\sigma(\phi)$-measurable iff it is of
the form $g=\rho\circ\phi$ for some measurable $\rho:\mcY\to\RR$ --- iff it is a
function of the coordinates.} any embedding computing
the same information receives the same value, and every product, power, or other
measurable function of the coordinates is credited for free.
\Cref{lem:induced-projection} in \Cref{sec:proofs} states the relation between
\eqref{eq:objective} and the space $V_{\phi}$ of \Cref{sec:intro} precisely: $T_\phi$
is the compression of $T$ onto $V_{\phi}$ and its future counterpart.
\paragraph{The class of maps, and the optimum over it.} The objective is defined for every
measurable $\phi$, but what is \emph{attainable} depends on which maps one is willing to
search. We therefore carry the class as an explicit parameter: $\Phi$ is a set of
measurable maps $\mcX\to\RR^m$, and $\phi^\ast$ denotes a maximizer of the objective over it,
\begin{equation}
\label{eq:argmax}
  \phi^\ast\;\in\;\argmax_{\phi\in\Phi}\;\norm{T_{\phi}}_{\HS}^2 ,
\end{equation}
attaining the value $\norm{T_{\phi^\ast}}_{\HS}^2$. Since the criterion sees $\phi$ only
through the algebra it generates, $\phi^\ast$ is determined only up to that algebra,
which is the object actually selected. Two properties hold for every $\Phi$: the value never exceeds
$\norm{T}_{\HS}^2=1+\chi^2(X;X')$, by the data-processing inequality, and it is
non-decreasing as $\Phi$ grows, in particular when a coordinate is adjoined to every map in
it. Three classes appear below --- all measurable maps, the Lipschitz maps that
\Cref{thm:diffusion-equivalence} produces, and the networks on the observed coordinates
that we actually fit.

\paragraph{Remeasuring the state.} Suppose the state is observed through a map
$h:\mcX\to\mcZ$, so that models must be built from $Z=h(X)$ rather than from $X$
itself. For a general $h$ this loses dependence, by data processing. When $h$ is
invertible, however, nothing is lost: the induced coupling $(h(X),h(X'))$ is the
original coupling relabeled, and every embedding $\phi$ of the state translates to
the embedding $\tilde{\phi} = \phi\circ h^{-1}$ of the observation --- the same coordinates, now
computed from $Z$, provided the class of maps on $\mcZ$ is rich enough to contain 
$\tilde{\phi}$. We return to this in the experiments (\Cref{subsec:exp-warped}).
See also \Cref{sec:invariance-app}.

\paragraph{A budget of $m$ coordinates.}
With $\Phi$ a family of maps $\mcX \to \RR^m$, how large should the dimension $m$ be?
We first note that this depends on the complexity of $\Phi$: a single coordinate,
$m=1$, made complex enough, can be injective, and its generated algebra then carries
everything, so the budget $m$ by itself is not an obstacle. The real question is ---
what $m$ suffices with maps of controlled complexity? For instance, what are we
guaranteed for a system with smooth dynamics in a $d$-dimensional state space and with
reasonably behaved maps $\phi$?

To answer this question, we first define the \emph{intrinsic dimension} of a system in
terms of quantities governed only by the coupling $(X,X')$ itself. Specifically, we set
$d^\ast$ to be the box dimension \citep{falconer2014fractal} of the state space $\mcX$
under the predictive diffusion metric of \citet{coifman2006diffusion}, in which two
states are close when their predictive laws are close. This definition parallels classical
constructions in dynamics, for instance 
\citep{farmer1983dimension},
\citep{sauer1991embedology} --- with the attractor replaced by the coupling's predictive
geometry, as in \citet{bittracher2018transition}. The intrinsic dimension behaves as
expected on smooth systems: on a compact $d$-dimensional state space with smooth
enough, nondegenerate transitions we have $d^\ast\le d$
(\Cref{prop:dstar-bound}), and the system of \Cref{sec:experiments} has
$d^\ast=10$. The full details of the construction are given in
\Cref{sec:diffusion-dim-app}.

In the next theorem we then show that $m = \lfloor 2d^\ast \rfloor + 1$ always
suffices. This holds no matter how many significant modes the operator carries: the
guaranteed budget is set by the dimension of the dynamics, not by the size of its
spectrum.

\begin{theorem}[A finite budget suffices]
\label{thm:diffusion-equivalence}
Assume the coupling is Hilbert--Schmidt with finite intrinsic dimension $d^\ast<\infty$.
Then for every integer $m>2d^\ast$ there exists a map
$\phi:\mcX\to\RR^m$, Lipschitz with respect to the predictive metric, whose generated
algebra carries the whole operator,
\begin{equation}
\label{eq:bridge-bracket}
  \norm{T_{\phi}}_{\HS}^2 \;=\; \norm{T}_{\HS}^2 \;=\; 1+\chi^2(X;X') .
\end{equation}
In particular $\lfloor2d^\ast\rfloor+1$ coordinates suffice.
\end{theorem}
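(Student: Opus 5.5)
The argument has two parts. First, the forward law sees the state only through the predictive metric, so any map injective on predictive classes loses nothing. Second, a Mañé-type embedding theorem produces such a map into $\RR^m$ for every $m>2d^\ast$.

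For the first part, let $\dJ(x,y)$ denote the predictive (diffusion) metric of \Cref{sec:diffusion-dim-app}. It is the $L^2$ distance between the densities $r(x,\cdot)$ and $r(y,\cdot)$, and $\dJ(x,y)=0$ exactly when $p(\cdot\cond x)=p(\cdot\cond y)$. I will show that if $\phi$ separates predictive classes, in the sense that $\phi(x)=\phi(y)$ implies $\dJ(x,y)=0$, then $\norm{T_\phi}_{\HS}^2=\norm{T}_{\HS}^2$. By \Cref{lem:induced-projection}, $T_\phi$ is the compression $P_{V_\phi}TP_{V'_\phi}$, where $V_\phi$ and $V'_\phi$ are the $\sigma(\phi)$-measurable functions under $\mu$ and $\mu'$ respectively. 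Since $x\mapsto p(\cdot\cond x)$ is constant on the fibers of $\phi$, every $Tf$ is $\sigma(\phi)$-measurable, so $P_{V_\phi}T=T$. For the future side I would use the density-ratio form: the chain rule for $\chi^2$ gives
\[
\chi^2(X;X')=\chi^2(X;\phi(X'))+\Exp{\chi^2\big(p(\cdot\cond X,\phi(X'))\,\|\,\mu'(\cdot\cond\phi(X'))\big)}.
\]
The one-sided property alone does not make this second term vanish.

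This exposes a subtlety. Predictive separation of the present does not automatically preserve the future side, so I need a symmetric metric. I would take the predictive metric to be the one built from both $r(x,\cdot)$ and $r(\cdot,x')$; the appendix presumably does this, given the notation $\Jf,\Jb$. Here $\dJ$ is taken on $\mcX$ under both roles, i.e.\ $\dJ=\max(\dJ^{\mathrm f},\dJ^{\mathrm b})$. Then $\phi(x)=\phi(y)$ forces both $r(x,\cdot)=r(y,\cdot)$ and $r(\cdot,x)=r(\cdot,y)$. Consequently $r(x,x')$ is $\sigma(\phi)\otimes\sigma(\phi)$-measurable, so $r=\rho\circ(\phi\times\phi)$ for some $\rho$. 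This $\rho$ is exactly the density ratio of $(Y,Y')$ against its product marginals. Hence $\int r^2=\int\rho^2$, and \eqref{eq:hs-chi2-prelim} yields \eqref{eq:bridge-bracket}. This step is routine once measurability of $\rho$ is checked via the factorization lemma in the footnote.

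The second part is the main obstacle: producing a Lipschitz map $\phi:(\mcX,\dJ)\to\RR^m$ that is injective on the metric quotient $\mcX/\{\dJ=0\}$. The quotient is a metric space $(\hat\mcX,\dJ)$ of upper box dimension $d^\ast$. Because $T$ is Hilbert--Schmidt, $x\mapsto r(x,\cdot)$ is an isometric embedding of $\hat\mcX$ into the Hilbert space $L^2(\mu')$, or into $L^2(\mu')\oplus L^2(\mu)$ in the symmetric version. I would then invoke the infinite-dimensional Mañé/Hunt--Kaloshin embedding theorem (Hunt--Kaloshin 1999; Robinson's book). For a compact subset $K$ of a Hilbert space with $\overline{\dim}_B(K)<d^\ast+\epsilon$ and integer $m>2\,\overline{\dim}_B(K)$, prevalent bounded linear maps $L:\mathcal{H}\to\RR^m$ are injective on $K$. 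Setting $\phi(x)=L\,r(x,\cdot)$ then gives a map that is Lipschitz with constant $\norm{L}$, since $L$ is linear and bounded and the embedding is isometric. Two points need checking. One is compactness, or reducing to it: I would use total boundedness, which finite box dimension gives, and pass to the completion, where Hunt--Kaloshin still applies. The other is that $m>2d^\ast$ strictly lets me pick $\epsilon$ with $m>2(d^\ast+\epsilon)$. If $\mu$ is not supported on a compact set, I would first restrict to a $\sigma$-compact full-measure subset, or work directly with the totally bounded image. Combining the two parts proves the theorem, and $m=\lfloor2d^\ast\rfloor+1$ is the smallest integer exceeding $2d^\ast$.
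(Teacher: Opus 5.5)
Your proposal follows the paper's proof. You use the joint forward--backward embedding $x\mapsto\big(r(x,\cdot),r(\cdot,x)\big)$; your $\max$ form of $\dJ$ is bi-Lipschitz to the paper's root-sum-of-squares, so $d^\ast$ is unchanged. You then pass to the compact closure via total boundedness, apply Hunt--Kaloshin, note that $\phi=L\circ J$ is $\dJ$-Lipschitz, and close with the density-ratio argument, which is exactly the paper's footnoted alternative to its singular-function compression.

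The one step to tighten is measurability. Fiber-constancy plus the Doob--Dynkin lemma does not by itself make $r$ measurable for $\sigma(\phi)\otimes\sigma(\phi)$. What supplies it is that $L$ is continuous and injective on the compact $K$, hence has a continuous inverse. So $J=(L|_K)^{-1}\circ\phi$, and with it every $\sigma_j\phi_j$ and $\sigma_j\psi_j$, is a measurable function of $\phi$, and so is the $L^2$ limit $r=1+\sum_j\sigma_j\phi_j\otimes\psi_j$ of the embedded pair. This is how the paper argues.
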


The proof sketch and the full proof are in \Cref{sec:diffusion-dim-app}. The proof
builds on general embedology results in Banach spaces \citep{hunt1999regularity}, and
is essentially an existence proof, exploiting a certain kind of random linear
mappings, and does not otherwise provide a recipe for constructing the map. Maximizing
\eqref{eq:objective} may be viewed as the learning of such a mapping.

\subsection{Connection with Other Objectives}
\label{subsec:objective-connections}

\paragraph{Span versus algebra: what is scored.} The classical rank-based objective,
VAMP \citep{wu2020variational} --- the objective of VAMPnets \citep{mardt2018vampnets}
--- searches for the map $\phi(x) = (\phi_1(x),\dots,\phi_m(x)) \in \RR^m$ for which
$T$ restricted to the linear \emph{span} of the coordinate functions,
$\operatorname{span}\Set{\phi_1,\dots,\phi_m}$, has the largest norm. In
\Cref{lem:induced-projection}
(\Cref{sec:proofs}) we show that the objective \eqref{eq:objective} is precisely this
criterion with the span replaced by the whole $\sigma$-algebra the coordinates
generate --- products, powers, and every other measurable function of the $\phi_i$
included. The span credits only the coordinates themselves --- every product and power
must be bought with a further coordinate --- while the algebra credits them for free.
We take the larger algebra, which allows the map $\phi$ to extract more information at
the same budget.

\paragraph{Estimation via the variational form.}
Next, one way to estimate a $\chi^2$ divergence is via its variational form. For the
coupling $(X,X')$, with joint law $P$ and independent product of marginals
$\mu\otimes\mu'$ as in \Cref{sec:prelim}, the $\chi^2$ is a supremum over functions
$g\in L^2(\mu\otimes\mu')$ \emph{of the pair}, called critics:
\begin{equation}
\label{eq:variational-body}
  \sup_{g}\;\Big(2\!\int\! g\,dP-\int\! g^2\,d(\mu\otimes\mu')\Big)
  \;=\;1+\chi^2(X;X')
\end{equation}
\citep{kanamori2009least,nguyen2010estimating}. Proofs and attribution
are in \Cref{sec:lsif-app}.

\paragraph{Bilinear versus full critics: what is estimated.} Applied to the induced
coupling $(\phi(X),\phi(X'))$, the supremum in \eqref{eq:variational-body} computes the
$\chi^2$ in \eqref{eq:objective}, with critics $g(y,y')$ now functions of the embedded
pair. We use this approach for the experiments in \Cref{sec:experiments}.

Notably, the critic class is where the span--algebra distinction re-enters: restricting
the critics $g$ in \eqref{eq:variational-body} to a smaller class captures a smaller part of the dependence. Over the full
critic class the supremum is the entire $1+\chi^2(\phi(X);\phi(X'))$.
Restricting to
\emph{bilinear} critics --- functions affine in each argument,
$g(y,y')=\alpha+a^{\top}y+b^{\top}y'+y^{\top}\!B\,y'$ --- admits only the coordinates
themselves, and the supremum collapses to
exactly the VAMP-2 span score of $\phi$ (see 
\citealp{turri2025self,haochen2021provable}). 

Finally, we note that the fact that the $\chi^2$
captures the full dependence of the coupling is well known. The novelty in this paper is in
maximizing $\phi$ against this criterion rather than the bilinear one, in the motivation
for doing so, and in its implications.

\section{Experiments}
\label{sec:experiments}

We study masking and the algebra objective on a multicomponent diffusion assembled
from published benchmark systems, with a coupling dial $\kappa$ controlling the
interdependence of its components. The system is intrinsically ten-dimensional, and
its components are of two kinds: two \emph{slow} ones with rich spectra, and six
\emph{fast} ones whose modes lie below them. We
show that span methods mask its fast components up to large rank --- through $k=80$,
with the transition at $k\approx100$, the product-mode count --- while
a ten-dimensional maximizer of the algebra objective recovers every component
(\Cref{subsec:exp-masking}). Both objectives are invariant under a
large class of reparametrizations (\Cref{subsec:general-embedding}), so constructing the
representations on the native state is no loss of generality; we test this
directly by rerunning the program through a high-dimensional, strongly
non-isometric warp of the state (\Cref{subsec:exp-warped}), where an interesting
optimization phenomenon also appears. Finally we ask what a representation buys
over regressing each target directly from the data: learned from unlabeled
observations, the embedding supports prediction from far fewer labeled samples
(\Cref{subsec:exp-fewshot}).

\subsection{The Composite System}
\label{subsec:exp-system}

\paragraph{Components.} The system is built from \emph{slow} and \emph{fast}
components, which play opposite roles in the masking geometry: the slow ones carry
rich spectra, so their products supply the leading modes of the joint operator,
while each fast one contributes a single mode, placed below those products.
The slow part consists of $r=2$ independent copies of the
lemon-slice diffusion --- overdamped Langevin dynamics in the circular multi-well
potential of \citet{bittracher2018transition}, as used by
\citet{klus2020eigendecompositions} and named in \citet{klus2020data} ---
\begin{equation*}
  V_{\mathrm{cool}}(x,y)=\cos\!\big(a\,\mathrm{atan2}(y,x)\big)
  +10\big(\sqrt{x^2+y^2}-1\big)^2,
\end{equation*}
with $a=10$ wells and diffusion coefficient $D=0.25$. The fast part is a
six-dimensional hypercube of independent double-well bits,
$V_{\mathrm{hot}}(x)=2(x^2-1)^2$ per coordinate at $D=0.75$, the benchmark family
of iVAMPnets \citep{mardt2022deep}. We call the ring components \emph{cools} and
the bits \emph{hots}, and the state is $2\times 2 + 6 \times 1 = 10$ dimensional. 
All experiments use transition pairs at lag $\tau=2$. 

\paragraph{Coupling.} A dial $\kappa\ge0$ couples the components: a Kuramoto
torque $\pm\kappa\sin(\theta_1-\theta_0)$ between the two cool component angles, and a
directed tilt $-\kappa\cos(\theta_0-\psi_j)\,x_j$ on bit $j$, with phases
$\psi_j=2\pi j/6$, so cool$_0$ modulates every bit while no bit feeds back
(\Cref{fig:coupling}). At
$\kappa=0$ the components are independent and the cool marginals are exact
nearest-neighbor ring walks; we use $\kappa\in\{0,0.5,1,2\}$, chosen so that
every component remains live and autonomous at every setting.

\begin{figure}[t]
\centering
\resizebox{\figwidth}{!}{%
\begin{tikzpicture}[>={Stealth}, thick,
  cool/.style={circle, draw, minimum size=1.15cm, inner sep=0pt},
  bit/.style={rectangle, rounded corners=2pt, draw, minimum width=0.9cm,
              minimum height=0.65cm, inner sep=1pt}]
  \node[cool] (c0) at (1.0,0) {$\theta_0$};
  \node[cool] (c1) at (5.4,0) {$\theta_1$};
  \node[above=1pt of c0] {\small cool$_0$ (ring, $a=10$)};
  \node[above=1pt of c1] {\small cool$_1$ (ring, $a=10$)};
  \draw[<->] (c0) -- (c1) node[midway, above] {\small $\pm\kappa\sin(\theta_1-\theta_0)$};
  \foreach \j in {0,...,5} {
    \node[bit] (b\j) at ({-1.9+\j*1.35}, -2.5) {$x_{\j}$};
    \draw[->] (c0) -- (b\j);
  }
  \node[anchor=west] at (-2.6,-1.35) {\small $-\kappa\cos(\theta_0-\psi_j)\,x_j$};
\end{tikzpicture}}
\caption{The coupling at dial $\kappa$: a symmetric torque between the two rings, and
a phase-staggered tilt from cool$_0$ on each bit, with no feedback. At $\kappa=0$ every
arrow vanishes.}
\label{fig:coupling}
\end{figure}

\paragraph{Spectrum, and why masking is predicted.} The leading singular values of a single cool component can be approximated by
those of a cyclic walk on its wells \citep{sarich2010approximation,prinz2011markov},
$\lambda_i=1-p(1-\cos(2\pi i/a))$ at hop probability $p=0.080$
\citep{levin2017markov}, yielding values $0.985/0.945/0.895$ for the leading nontrivial modes, verified in addition 
by a Markov state model approximation on simulated data (\Cref{fig:spectrum}). Each
bit contributes a single mode at $\sigma\approx0.55$. At
$\kappa=0$ the joint singular values are products across components, so the whole joint
spectrum follows from these marginals by multiplication, with no estimation involved.
\Cref{fig:spectrum} plots it: the two cools alone generate $a^{r}=100$ product modes,
and the \emph{entire} hot block falls below them, the lowest cool product sitting at
$(1-2p)^2\approx0.71$ against $\sigma_1^{\mathrm{hot}}=0.561$ for the highest mode that
involves a bit. The joint rank saturates at $100$. A rank-$k$ span optimum therefore fills with cool products
and excludes every bit until $k$ reaches the product count --- roughly one
hundred coordinates for a system whose intrinsic dimension is ten --- while six
individually predictable components sit outside the span.

\begin{figure}[t]
\centering
\includegraphics[width=\figwidth]{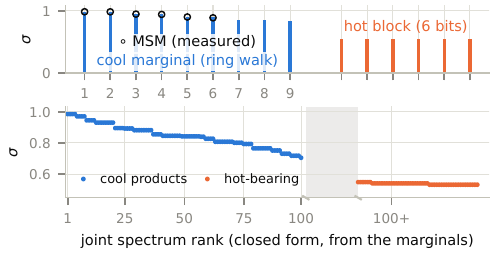}
\caption{The composite system's spectrum. Top: the marginal of one cool, a ring
walk, with the measured Markov state model overlaid, and the six bits as a degenerate
block at $\sigma\approx0.55$. Bottom: the joint spectrum at $\kappa=0$, the product of
the marginals. The rank axis is broken because the crossover is near rank $100$ but not
pinned there, the closed form idealizing each cool as a ten-state ring walk.}
\label{fig:spectrum}
\end{figure}

\subsection{Estimators and Protocol}
\label{subsec:exp-protocol}

\paragraph{Span.} VAMPnets \citep{mardt2018vampnets,wu2020variational} train a
$k$-output network on the VAMP score. We train to convergence under a fixed
protocol: two restarts, checkpoints to $2400$ epochs, restart selection and
snapshotting on held-out VAMP-E. Convergence is not a formality here: at every
setting we tested, an under-trained VAMPnet carries transient bit signal that its
own objective removes as the held-out score improves
(\Cref{subsec:exp-warped}), so a shorter protocol reports spurious recovery. An undersized network produces the same
artifact: unable to build the leading product modes, it spends those coordinates on the
bits instead (\Cref{sec:capacity-app}).

\paragraph{Algebra.} We maximize \eqref{eq:variational-body} by joint ascent over
an $m$-output embedding $\phi$ and an unrestricted MLP critic $g$, with snapshot
early stopping on the held-out objective value $\hat J$. Restricting the same
functional to bilinear critics recovers the span case
(\Cref{subsec:objective-connections}). Run in our own machinery, that restriction is
indistinguishable from VAMPnet on every readout, which confirms the phenomenon through
an independent implementation (\Cref{tab:dial}). Representations train on $n=30{,}000$ unlabeled pairs.
Writing $w^{\ell}$ for a fully connected trunk of $\ell$ hidden layers of $w$
units each, the algebra trunk is $64^2$ on the native observation and $128^2$ on
the warped one --- never wider than the span trunk it is compared against, which
we sweep to $512^2$.

\paragraph{Readout.} Every method is scored the same way: uniform prediction
targets --- each component's future state coordinates, identically for every
component --- regressed from the learned features by one neural readout on
separate splits, reported as per-component test $R^2$. Two reference rows apply that same readout to inputs
that are not learned. The first is the full state. Every representation is a function of
the state, so under a common readout none can predict a target better than the state
itself does: this row is the \emph{ceiling}, the best score any method could reach. The
second withholds the bit coordinates and reads out from the cools alone. This is the
\emph{floor benchmark}: once the components are coupled the cools carry some
information about the bits, so a method clears that level without representing a bit at
all. It is zero at $\kappa=0$ and rises with the coupling.

\subsection{Masking and Recovery}
\label{subsec:exp-masking}

\Cref{fig:kfamily} shows bit recovery against span rank $k$, at the largest trunks we
train ($512^2$), one curve per $\kappa$, with each curve's floor benchmark and ceiling marked, and the algebra objective at $m=10$ as a level marker.
The pattern is the same at every coupling: the span sits \emph{exactly} on its
floor benchmark through $k=80$ --- at $\kappa=0$ that floor is zero and the
converged span returns $-0.01$ at $k=80$ across two restarts, at $\kappa=1$ VAMPnet
reproduces the floor value $0.117$ to the third decimal at both $k=50$ and $k=80$ --- then admits the bits gradually
across $k=100$--$120$, near the product count. We call that rank the \emph{threshold}:
the smallest budget at which a span carries anything of the bits' own content. It does
not move with $\kappa$: coupling raises the floor (cool-borne bit information grows until, at
$\kappa=2$, the cools alone reach $82\%$ of the bits' predictable content) but not the
rank at which the bits' own content enters. \Cref{tab:dial} gives the same comparison
numerically, at one rank per coupling, where the floor and the ceiling are easier to
read off than from the figure: the span reproduces the floor, and the algebra objective
at $m=10$ reaches the ceiling, at every coupling. Cool recovery is uniformly high
($0.91$--$0.96$) for every method at every $k$ and $\kappa$.

\begin{table}[t]
\centering
\small
\setlength{\tabcolsep}{3pt}
\begin{tabular}{lcccc}
\hline
bit recovery ($R^2$) & $\kappa=0$ & $\kappa=0.5$ & $\kappa=1$ & $\kappa=2$ \\
\hline
floor (cools only)   & $0.00$ & $0.02$ & $0.12$ & $0.39$ \\
span, $k=30$         & $0.00$ & $0.02$ & $0.12$ & $0.39$ \\
bilinear critic      & $0.04$ & --- & $0.12$ & --- \\
algebra, $m=10$      & $0.24$--$0.27$ & $0.24$--$0.28$ & $0.26$--$0.32$ & $0.47$ \\
ceiling (raw state)  & $0.26$ & $0.27$ & $0.31$ & $0.48$ \\
\hline
\end{tabular}
\caption{Bit recovery along the coupling dial, mean over the six fast components.
The floor is what the cools alone reveal about them. The bilinear-critic row is our
own objective at $m=10$ with the critic restricted to the span, run at two couplings.
Algebra entries are min--max over three, two, three and two runs.}
\label{tab:dial}
\end{table}

\begin{figure}[t]
\centering
\includegraphics[width=\figwidth]{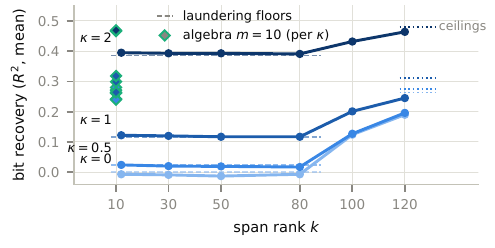}
\caption{Masking along the coupling dial. Bit recovery of the rank-$k$ span
optimum stays on the floor benchmark until $k$ approaches the product-mode
count ($\approx100$), at every coupling strength, while the algebra objective
recovers the bits at $m=10$. The floor rises with $\kappa$ as the cools come to carry more about the
bits; the threshold does not move.}
\label{fig:kfamily}
\end{figure}

\subsection{Warped Observations}
\label{subsec:exp-warped}

Native dynamical coordinates are rarely observed in practice: what is given instead
is a larger set of correlated measurements that describe them, which is why a
dimension reduction is needed at all. We model this situation by observing the state
through a fixed random injective warp $S:\RR^{10}\to\RR^{100}$ --- an orthonormal
injection followed by four random affine coupling layers of a normalizing flow ---
smooth, exactly invertible, and strongly non-isometric (local scale distortion
$\approx7$ measured on data pairs). Thus, in this subsection and the next, every method
receives the $100$-dimensional warped observation as input, in place of the
ten-dimensional native state used above. Since the warp is an invertible remeasurement of
the state, the achievable values of both objectives are unchanged (\emph{Remeasuring
the state}, \Cref{subsec:general-embedding}), so the population picture is unmoved.
The measurements agree: VAMPnet stays masked through $k=50$ and admits the bits across
$k=80$--$120$, as it does natively. What the warp costs is estimation: past the
threshold the best held-out span scores run about a third below native.

\paragraph{An optimization phenomenon, and its repair.} Optimizing the algebra objective
on the warped data by direct joint ascent over $\phi$ and $g$ falls short: it plateaus at
$\hat J\approx93$ against $\approx288$ on the native state, recovering the bits
only partially at full readout ($0.19$--$0.23$ against a $0.26$ ceiling) and weakly in
the few-label regime. The shortfall is not capacity --- enlarging $\phi$ in
width or depth strictly lowers the attained value, and enlarging the critic
leaves cold features' scores unchanged --- and it is not the estimand, since a
diagnostic $\phi$ frozen at a supervised inverse of $S$ scores at the native
level. It is the ascent itself.

The repair is standard unsupervised pretraining \citep{erhan2010why}: an
autoencoder on the unlabeled warped stream, trained on reconstruction alone with
no lagged pairs and no dynamical term (reconstruction $R^2=0.85$, so an imperfect
chart suffices), whose encoder warm-starts the ascent. It sees no dynamics and uses no
labels, so it remains an optimization aid rather than a second model. Started this way,
the ascent restores native-level recovery (bits $0.253$ at full readout, three seeds,
negligible seed variance).

The failure itself is interesting: a $30$k-parameter network with $n=30{,}000$ pairs and
a smooth invertible target is a regime in which direct optimization would be expected to
succeed, and it does not. Our measurements place the difficulty in the joint ascent
rather than in the objective or the architecture, and we think identifying it is a
worthwhile question in its own right.

\paragraph{The objective, not the optimizer, removes the bits.} The same warm
start separates the two objectives cleanly, and it is a neutral starting point
precisely because the chart is dynamics-blind: having never seen a lagged pair,
it carries no preference among components into the comparison, so every
divergence below is attributable to the objective that follows it. Given the
identical autoencoder chart, the free-critic ascent keeps the bits ($0.253$); the bilinear-critic
ascent removes them ($-0.01$); and a VAMPnet whose trunk is initialized from the
chart \emph{starts} with readout-verified bit information ($R^2=0.244$ at its
first checkpoint, at the recovery level of the algebra $\phi$) and trains it
away while its held-out score improves monotonically, reaching $-0.01$ by
convergence (\Cref{fig:warped}). Every convergence trace we recorded, in every setting, shows this
decay of early transient bit signal; the warm-started run certifies that what
decays is real, usable information, removed because the span optimum excludes
it. This is also why the convergence protocol of \Cref{subsec:exp-protocol} is
load-bearing: any under-trained span representation exhibits transient
``recovery'' that vanishes at convergence.

\subsection{Prediction from Few Labels}
\label{subsec:exp-fewshot}

Why learn a representation at all, when any target can be regressed directly
from the raw data? We show that on the warped observation the algebra objective learns
representations that support \emph{few-shot learning}. Representations are trained on
$n=30{,}000$ \emph{unlabeled} pairs, each prediction target is then learned from $300$
noisy labeled samples, and we report test $R^2$ against clean targets, resampled over
twenty independent label draws with all methods paired on identical draws. Five
representations are compared under that protocol: the warped observation itself, a
rank-$k$ span, the autoencoder chart, the algebra objective at $m=10$, and the native
state, which is the ceiling.
\Cref{tab:fewshot} gives the comparison. The algebra representation beats direct
regression on both kinds of component, and beats every span rank on the bits.
Its advantage is also not inherited from the unsupervised warm start that initializes
the ascent: the autoencoder chart alone already improves on direct regression, and
training the algebra objective from that chart improves on it again.

\begin{table}[t]
\centering
\small
\setlength{\tabcolsep}{4pt}
\begin{tabular}{lcc}
\hline
from $300$ labels & cools & bits \\
\hline
direct regression        & $0.40$--$0.44$ & $0.03\pm0.04$ \\
span, any rank $12$--$120$ & ---            & $\le 0$ \\
autoencoder chart alone  & $0.49$         & $0.09\pm0.02$ \\
algebra, $m=10$          & $0.77$--$0.80$ & $0.13\pm0.02$ \\
ceiling (full sample)    & ---            & $0.26$ \\
\hline
\end{tabular}
\caption{Prediction from few labels on the warped observation, mean $\pm$ sd over
twenty paired label draws that resample the labels only, with nothing retrained. Spans
are at or below zero on the bits at every rank we measured, $k=12$ to $120$, including
the ranks whose full-sample readout does recover them.}
\label{tab:fewshot}
\end{table}

Span representations sit at or below zero on the bits at \emph{every} rank
$k=12$--$120$, in every setting and at every capacity we measured --- including
ranks past the threshold whose full-sample readout does recover the bits (native
$512^2$, $k=120$: $0.17$--$0.19$ at full sample, $-0.05$ at $300$ labels even
with clean labels). Admission into the span at full sample does not transfer to
the few-label regime: the information is spread across $\approx100$
coordinates, and $300$ labels cannot reassemble it, while the algebra
objective concentrates it in ten. The scope of the advantage is two-fold.
Dominant structure is recoverable from few labels through either compression at
sufficient rank or the algebra objective. The weak components are recoverable only
through the algebra representation --- at full sample by any route that does
not compress them away, at few labels by nothing else we measured.

\section{Conclusion, Limitations, and Future Work}
\label{sec:conclusion}

We proposed a criterion for reducing a dynamical system to few coordinates: score the
algebra the feature map generates rather than the span of its coordinates, so that a
component costs its generators and not its modes. The objective is a $\chi^2$ dependence
between the embedded present and future, optimized directly by standard density-ratio
estimators, and its variational form places VAMP and our criterion in one family,
separated only by the critic class --- bilinear for the span, unrestricted for the
algebra. Experimentally, span objectives exhibit the motivating failure, linear
masking --- whole components dropped at every rank below the count of interaction
modes, a count exponential in the number of components --- while the algebra
objective recovers them at one coordinate per degree of
freedom, and its representations support prediction from few labels where regression
on the observation fails. These results extend the operator-theoretic program rather
than displace it: what changes is the price of a component --- degrees of freedom
rather than modes.

The warped-observation study (\Cref{subsec:exp-warped}) also yields a finding: from
an unsupervised warm start --- standard pretraining, blind to the dynamics --- the
ascent reaches its native value, and the measurements localize the cold-start gap to
the joint optimization rather than to the objective. We leave this as future work.

Finally, the experimental construction we use is a natural extension of studied
benchmark systems, but the data are simulated rather than real --- the next step is
real data, where the components are not known in advance.
\bibliographystyle{plainnat}
\bibliography{refs}

\clearpage
\appendix
\setcounter{figure}{0}
\renewcommand{\thefigure}{S\arabic{figure}}
\setcounter{table}{0}
\renewcommand{\thetable}{S\arabic{table}}
\section*{Supplementary Material}
\label{app:start}

\section*{Contents}

\begin{description}\setlength{\itemsep}{1pt}
\item[A. The Projection Lemma.] The structural identity the algebra objective rests on:
  the coupling induced by $\phi$ is the two-sided compression of $T$ onto the
  algebra $\phi$ generates.
\item[B. The objective as a $\chi^2$ divergence.] That the algebra objective \emph{is} a
  $\chi^2$ divergence, its variational form and the least-squares estimator we
  use, the critic-class reading that puts span and algebra on one axis, the
  bilinear ablation, and what a held-out value does and does not certify.
\item[C. Additional literature notes.] Dependence-maximization methods and the
  information-bottleneck reductions, component identification, repeated
  eigendirections, and autoencoder models of the dynamics --- several of which
  carry a cost exponential in the number of components, in coordinates or in
  batch size.
\item[D. Additional experiments.] D.1 isolates what the unsupervised warm start
  supplies on its own. D.2 is the capacity study: an underfit span network does not
  build the interaction modes its own objective calls for, and may leak bit
  information in their place.
\item[E. The predictive metric.] The \emph{geometry} of the singular system, as
  against the \emph{size} that B measures --- the $\chi^2$ distance between
  predictive laws, the kernel of $TT^\ast$ and its canonical RKHS, and how the
  feature class imposes a topology of its own.
\item[F. Invariance.] The algebra objective is unchanged by any bi-measurable
  remeasurement of the state, and what that does \emph{not} give a fitted model.
\item[G. What a coordinate buys.] Where the algebra objective spends a coordinate, asked
  twice. Against a \emph{rank} budget on a product system, in closed form: products
  and powers are free, a new component is what costs. Against a
  \emph{reconstruction} objective: that one allocates by variance, this one by
  predictability.
\item[H. The predictive metrics and the intrinsic dimension.] The construction
  \Cref{thm:diffusion-equivalence} rests on --- the forward and backward metrics,
  the joint embedding, \Cref{def:intrinsic-dimension}, and the bound $d^\ast\le d$
  for smooth systems (\Cref{prop:dstar-bound}).
\item[I. Proof of \Cref{thm:diffusion-equivalence}.] The budget theorem. The
  \emph{definition} it depends on is \Cref{def:intrinsic-dimension}, in H.
\item[J. Experimental details and reproducibility.] Compute, seeds, metrics, and
  the hyperparameter ranges searched.
\end{description}

\section{The Projection Lemma}
\label{sec:proofs}

Throughout, write $H := L^2(\mcX \times \mcX, \bar{\mu})$ for the joint space, and view
every operator in \Cref{lem:induced-projection} as acting on $H$. We identify
$L^2(\mcX,\mu)$ with $V_X$ via the isometry $g \mapsto g \circ \mathrm{pr}$,
$(g\circ\mathrm{pr})(x,x') = g(x)$ --- an isometry because the first marginal of
$\bar\mu$ is $\mu$ --- and likewise $L^2(\mcX,\mu')$ with $V_{X'}$ via $f \mapsto f(x')$,
the second marginal being $\mu'$. Under these identifications $T$ is the lift
$f(x') \mapsto (Tf)(x)$, a map $V_{X'} \to V_X$. For a sub-$\sigma$-algebra $\mcG$ write
$V_\mcG$ for its closed subspace of measurable functions in $H$ and $P_{V_\mcG}$ for the
orthogonal projection onto it. The present algebra $\Vphi\subseteq V_X$ is then the
functions of $\phi(X)$, and the future algebra $\Vphip\subseteq V_{X'}$ the functions of
$\phi(X')$.

\begin{lemma}[Induced projection]
\label{lem:induced-projection}
As operators on $H$, the transfer operator factors into projections
$T=P_{V_X}P_{V_{X'}}$, and the operator of the coupling induced by any measurable $\phi$
is the two-sided compression of $T$ --- the present projected onto $\Vphi$, the future
onto $\Vphip$,
\begin{equation}
\label{eq:operator_projection_x_y}
  T_{Y,Y'}\;=\;P_{\Vphi}\,T\,P_{\Vphip} .
\end{equation}
\end{lemma}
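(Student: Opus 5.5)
The plan is to identify every projection with a conditional expectation on $H$ and then use the tower property. For a sub-$\sigma$-algebra $\mcG$ of the product $\sigma$-algebra, $P_{V_\mcG}$ is the conditional expectation $\Exp{\,\cdot\cond\mcG}$ under $\bar\mu$, since conditional expectation is the $L^2$-orthogonal projection onto the $\mcG$-measurable functions. We take $V_X$ to be generated by $\sigma(X)=\sigma(\mathrm{pr}_1)$ and $V_{X'}$ by $\sigma(X')$. Under the identification $f\mapsto f(x')$, $T$ acts on $V_{X'}$ as $f(X')\mapsto \Exp{f(X')\cond X}$, which is $P_{V_X}$ applied to $f(X')$. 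To get the factorization $T=P_{V_X}P_{V_{X'}}$ as an operator on all of $H$, we extend $T$ by composing with $P_{V_{X'}}$, which reduces to $T$ on $V_{X'}$. This step is essentially definitional: it says $T$ is $P_{V_X}$ restricted to $V_{X'}$, extended by zero on $V_{X'}^\perp$.

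For the induced coupling, $Y=\phi(X)$ and $Y'=\phi(X')$, so by the same reasoning $T_{Y,Y'}=P_{\Vphi}P_{\Vphip}$. Here the induced transfer operator is lifted to $H$ via $\rho(Y')\mapsto \Exp{\rho(Y')\cond Y}$. The lift is legitimate because the law of $(Y,Y')$ is the pushforward of $\bar\mu$, so $L^2$ of the induced marginals is isometric to $\Vphi$ and $\Vphip$ via $\rho\mapsto\rho\circ\phi$. The footnote on $\sigma(\phi)$-measurability supplies the description of $\Vphi$ as functions $\rho\circ\phi$.

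It remains to show $P_{\Vphi}P_{V_X}P_{V_{X'}}P_{\Vphip}=P_{\Vphi}P_{\Vphip}$. Since $\sigma(\phi(X))\subseteq\sigma(X)$, we have $\Vphi\subseteq V_X$, and the tower property gives $P_{\Vphi}P_{V_X}=P_{\Vphi}$. Symmetrically, $\Vphip\subseteq V_{X'}$ gives $P_{V_{X'}}P_{\Vphip}=P_{\Vphip}$. Combining the two yields \eqref{eq:operator_projection_x_y}.

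The main obstacle is bookkeeping rather than analysis. One has to check that the conditional expectation given $Y$, computed in the induced coupling, matches the one computed in $H$ given $\sigma(\phi(X))$. This follows from the change-of-variables formula for pushforward measures, applied to test functions $\eta(Y)$. One also has to state carefully that both sides are regarded as operators on $H$, with the domain restriction absorbed by the right-hand projections.
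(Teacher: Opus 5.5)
Your proposal is correct and follows the paper's proof essentially step for step. Both arguments get the factorization $T=P_{V_X}P_{V_{X'}}$ by reading conditional expectation as orthogonal projection, apply the same factorization to the induced coupling, and then collapse nested projections to obtain $P_{\Vphi}TP_{\Vphip}=P_{\Vphi}P_{\Vphip}=T_{Y,Y'}$; your tower property is the paper's fact (F2). Your remark that the induced operator must first be transported from $L^2$ of the pushforward law into $H$ via $\rho\mapsto\rho\circ\phi$ makes explicit an identification the paper uses without comment when it applies the first claim to $(Y,Y')$.
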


The proof rests on two elementary facts.

\begin{description}
\item[(F1) Conditional expectation is orthogonal projection.] For a sub-$\sigma$-algebra
$\mcG$ of the product $\sigma$-algebra, $P_{V_{\mcG}} h = \Exp{h \cond \mcG}$ for every
$h \in H$: the conditional expectation is the $\mcG$-measurable function minimizing
$\norm{h - g}_{L^2}$ over $\mcG$-measurable $g$, which is exactly the orthogonal
projection of $h$ onto the closed subspace $V_{\mcG}$. In particular $P_{V_X} h$ depends
only on $x$ and $P_{V_{X'}} h$ only on $x'$.
\item[(F2) Nested projections collapse.] If $A \subseteq B$ are closed subspaces with
orthogonal projections $P_A, P_B$, then $P_A P_B = P_B P_A = P_A$.
\end{description}

\begin{proof}[Proof of \Cref{lem:induced-projection}]
\emph{First claim, $T = P_{V_X} P_{V_{X'}}$.} Take $h \in H$. By (F1), $P_{V_{X'}} h$ is a
function of $x'$ alone, $f(x') := \Exp{h \cond X' = x'}$, so $P_{V_{X'}} h = f \in V_{X'}$,
and applying (F1) again,
\begin{equation*}
  P_{V_X} P_{V_{X'}} h = P_{V_X} f = \Exp{ f(X') \cond X = x } = (T f)(x).
\end{equation*}
Thus $P_{V_X} P_{V_{X'}}$ restricts to $T$ on $V_{X'}$ and annihilates $V_{X'}^{\perp}$
(where $P_{V_{X'}} = 0$), so the two operators coincide on $H$.

\emph{Second claim, \eqref{eq:operator_projection_x_y}.} The induced pair
$(Y,Y') = (\phi(X),\phi(X'))$ is itself a coupling, so the first claim applied to it gives
$T_{Y,Y'} = P_{\Vphi} P_{\Vphip}$. Because $\sigma(Y) \subseteq \sigma(X)$ we have
$\Vphi \subseteq V_X$, and symmetrically $\Vphip \subseteq V_{X'}$. Applying (F2) to each
nested pair --- $P_{\Vphi} P_{V_X} = P_{\Vphi}$ and $P_{V_{X'}} P_{\Vphip} = P_{\Vphip}$ ---
with the first claim,
\begin{align*}
  P_{\Vphi}\, T\, P_{\Vphip}
  &= P_{\Vphi} \, P_{V_X} P_{V_{X'}} \, P_{\Vphip}
   = (P_{\Vphi} P_{V_X}) (P_{V_{X'}} P_{\Vphip}) \\
  &= P_{\Vphi} P_{\Vphip}
   = T_{Y,Y'},
\end{align*}
which is \eqref{eq:operator_projection_x_y}.
\end{proof}

\begin{remark}
The argument uses only nesting, so the factorization $T=P_VP_{V'}$ underlying
\eqref{eq:operator_projection_x_y} holds for
\emph{any} pair of closed subspaces in place of $\Vphi \subseteq V_X$ and
$\Vphip \subseteq V_{X'}$ --- in particular for a reproducing-kernel subspace of the
algebra, which is what later licenses modeling the operator on an RKHS rather than on all
of $L^2$.
\end{remark}

\section{The Objective as a $\chi^2$ Divergence, and its Direct Estimation}
\label{sec:lsif-app}

This section carries the proofs and the attribution for
\Cref{subsec:objective-connections}: the criterion \emph{is} a
$\chi^2$ divergence, its optimal witness is the density ratio $r$ of
\eqref{eq:density-ratio}, and least-squares density-ratio estimation computes it
variationally, with no covariance inversion and no shared ridge.

The criterion is a functional of one object, the density ratio of the
coupling. Intrinsically it is the Radon--Nikodym derivative of the
joint law $P$ of $(X,X')$ against the product of its marginals,
\begin{equation}
\label{eq:density-ratio}
  r \;=\; \frac{dP}{d(\mu\otimes\mu')},
  \qquad r(x,\cdot)=\frac{dp(\cdot\cond x)}{d\mu'},
\end{equation}
which needs no base measure and lies in $L^2(\mu\otimes\mu')$ exactly when
$\chi^2(X;X')<\infty$. If $\mu,\mu'$ carry densities $p_X,p_{X'}$ with respect to a
common reference $dz$, then $r$ is represented by
$r(x,x')=p(x,x')/(p_X(x)p_{X'}(x'))=p(x'\cond x)/p_{X'}(x')$ --- the reference
cancels in the ratio --- and the integrals $\int(\cdot)\,dz$ below are taken against
that $dz$, with $\mu'(dz)=p_{X'}(z)\,dz$. Its diagonal (singular) expansion
\citep{lancaster1958structure,renyi1959measures} is
\begin{equation}
\label{eq:lancaster}
  r(x,x') \;=\; 1+\sum_{j\ge1}\sigma_j\,\phi_j(x)\,\psi_j(x'),
\end{equation}
with $\{\phi_j\}$ orthonormal in $L^2(\mu)$, $\{\psi_j\}$ orthonormal in
$L^2(\mu')$, and $(\sigma_j,\phi_j,\psi_j)$ the singular system of $T$
\eqref{eq:cond-operator}. Here $\sigma_1$ is the maximal correlation and $\sigma_0=1$
on constants.

\paragraph{The objective.} The criterion of \eqref{eq:objective} is
the total $\chi^2$ dependence between the embedded present and future,
$\norm{T_{\phi}}_{\HS}^2-1=\chi^2(\phi(X);\phi(X'))$, and at $\phi=\mathrm{id}$ it
is
\begin{equation}
\label{eq:chi2-dependence}
  \chi^2(X;X') \;=\; \iint (r-1)^2\, p_X(x)\,p_{X'}(x')\,dx\,dx' \;=\; \sum_{j\ge1}\sigma_j^2 .
\end{equation}
This scalar is twice the squared-loss mutual information, which is conventionally
defined as $\tfrac12\chi^2$, and is the Pearson mean-square contingency and the
total principal inertia
\citep{calmon2017principal}. It measures the total mass of
the singular spectrum.

\begin{definition}[$\chi^2$ divergence]
\label{def:chi2-div}
For probability measures $P\ll Q$ on a common space,
\begin{equation}
\label{eq:chi2-div}
  \chi^2(P\,\|\,Q)
  \;=\;\int\Big(\frac{dP}{dQ}-1\Big)^{2}dQ
  \;=\;\int\Big(\frac{dP}{dQ}\Big)^{2}dQ\;-\;1,
\end{equation}
and $\chi^2(P\,\|\,Q)=\infty$ if $P\not\ll Q$ or the integral diverges.
\end{definition}

It is the $f$-divergence of $f(t)=(t-1)^2$: nonnegative, zero exactly at $P=Q$, and
satisfying the data-processing inequality \citep{sugiyama2012density}.

\begin{lemma}[the objective is a $\chi^2$ mutual information]
\label{lem:chi2-mi}
Fix an embedding $\phi$, let $P_\phi$ be the joint law of the pair
$(\phi(X),\phi(X'))$ and $Q_\phi$ the product of its marginals. Then
\begin{equation}
\label{eq:objective-as-divergence}
  \norm{T_{\phi}}_{\HS}^2 \;=\; 1+\chi^2\big(P_\phi\,\|\,Q_\phi\big),
\end{equation}
the $\chi^2$ analogue of a mutual information: the divergence of the coupling from
the independent coupling with the same marginals.
\end{lemma}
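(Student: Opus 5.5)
The plan is to reduce the statement to the identity \eqref{eq:hs-chi2-prelim}, applied to the induced coupling rather than to $(X,X')$. Set $Y=\phi(X)$ and $Y'=\phi(X')$, with marginal laws $\nu=\phi_\#\mu$ and $\nu'=\phi_\#\mu'$. The joint law of $(Y,Y')$ is $P_\phi=(\phi\times\phi)_\#\bar\mu$, and the product of its marginals is $Q_\phi=\nu\otimes\nu'$. By definition $T_\phi=T_{Y,Y'}$ maps $L^2(\nu')$ to $L^2(\nu)$ and sends $f$ to $\Exp{f(Y')\cond Y=y}$. So the content of the lemma is the identity $\norm{T_{Y,Y'}}_{\HS}^2=\int r_\phi^2\,dQ_\phi$, where $r_\phi=dP_\phi/dQ_\phi$, combined with the second form of \eqref{eq:chi2-div}.

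\emph{Step 1: absolute continuity and the kernel.} The first step is to show that $P_\phi\ll Q_\phi$ whenever the original coupling is Hilbert--Schmidt. The route is data processing. Since $r\in L^2(\mu\otimes\mu')$, we have $P\ll\mu\otimes\mu'$. Pushing both measures forward by $\phi\times\phi$ preserves this absolute continuity, and the pushforward of $\mu\otimes\mu'$ is exactly $\nu\otimes\nu'$. The density is obtained by conditional expectation,
\[
r_\phi(y,y')=\Exp{r(X_1,X_2')\cond \phi(X_1)=y,\ \phi(X_2')=y'},
\]
where $(X_1,X_2')\sim\mu\otimes\mu'$. Jensen's inequality then gives $\int r_\phi^2\,dQ_\phi\le\int r^2\,d(\mu\otimes\mu')<\infty$. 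Disintegrating $P_\phi$ over its first marginal shows that $(T_\phi f)(y)=\int r_\phi(y,y')f(y')\,d\nu'(y')$, because
\[
\int g(y)f(y')\,dP_\phi=\int g(y)f(y')\,r_\phi\,dQ_\phi
\]
for all bounded $g,f$, and this identity characterizes the conditional expectation.

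\emph{Step 2: the Hilbert--Schmidt norm of an integral operator.} An integral operator $L^2(\nu')\to L^2(\nu)$ with kernel $k\in L^2(\nu\otimes\nu')$ is Hilbert--Schmidt, and its HS norm equals $\norm{k}_{L^2(\nu\otimes\nu')}$. This is standard: expand $k$ in the product orthonormal basis and apply Parseval. Hence
\[
\norm{T_\phi}_{\HS}^2=\int r_\phi^2\,dQ_\phi=1+\chi^2(P_\phi\|Q_\phi),
\]
where the last equality is \eqref{eq:chi2-div} with $dP_\phi/dQ_\phi=r_\phi$. Alternatively, one can say this is literally \eqref{eq:hs-chi2-prelim} for the coupling $(Y,Y')$, since that preliminary identity was stated for an arbitrary coupling. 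Its only hypothesis is that the coupling be Hilbert--Schmidt, and Step 1 supplies that.

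The only real obstacle is the measure-theoretic bookkeeping in Step 1. One must justify that $r_\phi$ is a legitimate version of the Radon--Nikodym derivative, and that $T_\phi$ is well defined on $L^2$ equivalence classes, which requires the pushforward spaces to be standard Borel so that disintegration applies. I would handle this by assuming that $\mcY=\RR^m$ and $\mcX$ are standard Borel, and by verifying the defining identity $\int_{A\times B}r_\phi\,dQ_\phi=P_\phi(A\times B)$ on rectangles, then extending by the $\pi$--$\lambda$ theorem. As a consistency check, \Cref{lem:induced-projection} gives the same conclusion: $T_\phi=P_{\Vphi}TP_{\Vphip}$ is a compression of a Hilbert--Schmidt operator, so it is itself Hilbert--Schmidt with norm at most $\norm{T}_{\HS}$.
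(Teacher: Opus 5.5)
Your proof is correct and follows essentially the paper's route. Both arguments reduce the lemma to the identity $\norm{T_\phi}_{\HS}^2=\int r_\phi^2\,dQ_\phi$ for the induced pair: the paper reads it off the singular (Lancaster) expansion of $r_\phi$, while you use Parseval in a product basis. The one real difference is that the paper splits on whether $P_\phi\ll Q_\phi$ and notes that both sides are infinite otherwise, whereas your data-processing and Jensen argument in Step 1 shows that this case cannot arise under the standing Hilbert--Schmidt assumption.
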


\begin{proof}
When $P_\phi\ll Q_\phi$ the ratio is $r_\phi$, the object \eqref{eq:density-ratio}
of the embedded pair, and \eqref{eq:chi2-div} evaluates through the expansion
\eqref{eq:lancaster} to $\int r_\phi^2\,dQ_\phi-1=\sum_{j\ge1}\sigma_j(\phi)^2$,
which together with the constant mode is $\norm{T_{\phi}}_{\HS}^2$, as in
\eqref{eq:chi2-dependence}. When $P_\phi\not\ll Q_\phi$ the operator is not
Hilbert--Schmidt and both sides are infinite.
\end{proof}

\paragraph{The role of the ratio.} By \eqref{eq:chi2-div} the objective is the
squared $L^2(Q_\phi)$ distance of $r_\phi$ from the constant $1$ --- dependence
measured as distance from independence --- and by \eqref{eq:density-ratio} the rows
$r_\phi(y,\cdot)$ are exactly the predictive laws of the embedded system. So the
divergence and the predictor are two functionals of the one object $r_\phi$, and any
estimator that recovers $r_\phi$ recovers both at once.

\paragraph{Least-squares density-ratio estimation.} LSIF
\citep{kanamori2009least,sugiyama2012density} fits $r$ by least squares in $L^2(Q)$
without forming either density. A critic is a function $g(x,x')$ \emph{of the pair}
--- it stands in for $r$, which lives on the product space; its slice $g(x,\cdot)$
at the optimum is the predictive density weight of \eqref{eq:density-ratio}. For any
critic $g\in L^2(Q)$, using $\int rg\,dQ=\int g\,dP$ --- the defining property of
the ratio, which removes $r$ from the criterion,
\begin{equation}
\label{eq:lsif-objective}
  J(g)\;=\;2\!\int g\,dP-\int g^2\,dQ
  \;=\;\int r^2\,dQ\;-\;\norm{g-r}_{L^2(Q)}^2 ,
\end{equation}
so
\begin{equation}
\label{eq:lsif-sup}
  \sup_{g}\,J(g)\;=\;1+\chi^2(P\,\|\,Q),
  \qquad\text{attained at } g=r .
\end{equation}
This is \eqref{eq:objective-as-divergence} in variational form --- the $\chi^2$ case
of the convex-dual representation of $f$-divergences
\citep{nguyen2010estimating}; the same functional under the name squared-loss mutual
information is developed by \citet{suzuki2013sufficient}. Empirically the first
integral runs over the observed pairs and the second over cross-pairings of present
with future samples, which sample $Q$ for free. The two consequences used in the
body: first, there is no whitening inverse. The critic's capacity is one budget,
but it is spent by the least-squares fit of $r$ itself --- concentrated where the
dependence is --- rather than assigned by the marginal spectrum of the features as
the shared whitening ridge of the kernel realization assigns it. (Per-observable
regularization is a property of the downstream \emph{prediction} stage, where each
regressed measurable carries its own ridge; the critic does not provide it and is
not meant to.) Second, by \eqref{eq:lsif-objective} $J(g)\le 1+\chi^2$ for
\emph{every} $g$, so on a held-out split the estimate approaches the truth from
below and cannot be inflated by overfitting. It is not monotone in training --- an overfit critic
drifts from $r$ and $J$ falls --- so the stopping point is itself selected on the
held-out value. This is a property of the estimate, not of a critic class: the
bilinear (span) critic, whose training is monotone and stable on the decoupled
system, develops the same post-peak decline once the components are coupled. The
reported model is the held-out snapshot for every critic class alike.

\paragraph{The critic-class ablation.} Holding the functional, the data, the
$m=10$ embedding architecture, the optimizer, the schedule and the stopping rule
fixed, and changing only the critic class from the free MLP to the bilinear
$g=1+\phi(x)^\top\!P\phi(x')$, reproduces the span behaviour inside our own
machinery. Per-component bit $R^2$ under the neural readout, against a raw-state
ceiling of $0.26$ at $\kappa=0$ and $0.31$ at $\kappa=1$. At $\kappa=0$ the
bilinear critic gives $0.037$ (cools $0.93$), where the free critic gives
$0.271/0.242/0.261$ over three seeds. At $\kappa=1$ the bilinear critic gives
$0.119/0.111/0.119$, which is the floor benchmark $0.117$ to within seed noise,
where the free critic reaches $0.264$--$0.318$. At both couplings the bilinear
row sits at the floor to within a few hundredths, far below the ceiling the free
critic reaches, so the phenomenon does not depend on the span estimator being a
VAMPnet --- it follows from the critic class alone.

\paragraph{Estimating the dependence.} At a fixed $\phi$, $\chi^2(\phi(X);\phi(X'))$
is a divergence of the joint law of the $k$-dimensional pair
$(\phi(X),\phi(X'))$, an off-the-shelf density-ratio problem. Direct estimators
include least-squares importance fitting \citep{kanamori2009least}, squared-loss
mutual information and its dimension-reduction use
\citep{suzuki2012canonical,suzuki2013sufficient}, convex variational divergence
estimation \citep{nguyen2010estimating,nowozin2016fgan}, and kernel dependence
measures \citep{gretton2005measuring}, with \citet{sugiyama2012density} the
reference treatment. None of these diagonalizes $T$, so the objective is
spectrum-free to evaluate --- the singular system enters only the analysis of
\Cref{thm:diffusion-equivalence}.

\begin{remark}[span and algebra are critic classes]
\label{rem:critic-classes}
Restricting the critic decides which functional \eqref{eq:lsif-sup} computes. Write
$u=\phi(X)$ and $v=\phi(X')$ for the present and future features, whitened and
mean-free, and $\hat C_{01}=\Exp{uv^\top}$ for their cross-covariance across the lag.
A direct computation then gives
$\max_{A}J\big(1+u^\top\!Av\big)=1+\norm{\hat C_{01}}_{F}^2$ --- the span (VAMP-2)
score of the coordinates --- while the supremum over all of $L^2(Q_\phi)$ is
\eqref{eq:objective-as-divergence}, the algebra. A bilinear critic of rank $m$
estimates the top $m$ modes: mode counting re-enters exactly through the critic's
rank, and declining that restriction is the same choice as scoring the algebra in
place of the span. Each half is separately on record. The rank-restricted maximum
is the variational principle of \citet{wu2020variational}; the same bilinear
restriction of \eqref{eq:lsif-objective} recurs as the spectral contrastive loss
\citep{haochen2021provable}, in discrete plug-in form in correspondence analysis
\citep[\S4.6]{greenacre1984correspondence} and \citet{riba2020regularized}, and ---
closest to the present setting --- in \citet{turri2025self}, who identify its
optimal value with the VAMP-2 score for exactly the present--future ratio.
\citet{kostic2024neural} fit low-rank models of $r$ with this loss and state that
the full-model optimum is the $\chi^2$ divergence. Neither endpoint is new and we claim
neither. Reading them as one axis is a matter of vocabulary rather than of result: mode
counting enters exactly through the critic's rank, and declining that restriction
\emph{is} the choice of algebra over span. What the reading is for is that it makes the
value a quantity to \emph{maximize over the embedding}. These derivations take the ratio
on the full space, where the unrestricted value $1+\chi^2(X;X')$ is a constant of the
coupling. Ours is the pushforward ratio, whose value $1+\chi^2(\phi(X);\phi(X'))$ moves
with $\phi$ --- which is what makes it a criterion for choosing $\phi$ at all.
\end{remark}

\paragraph{What the estimate certifies.} By \eqref{eq:lsif-sup}, restricting $g$ to a class
only lowers the value, so for the fitted $\phi$ the held-out estimate is at most
$1+\chi^2(\phi(X);\phi(X'))=\norm{T_{\phi}}_{\HS}^2$, the criterion's value at that map.
The estimate therefore approaches its target from below and cannot be inflated by
overfitting the critic. What it certifies is constructive and one-sided: the reported value
is attained by a map we exhibit, so $m$ coordinates \emph{suffice} for it. Showing $m$
coordinates \emph{necessary} would need an upper bound over all $\phi$, which no lower
bound supplies and which we do not claim.

\paragraph{Convergence of the estimate with the budget.} \Cref{fig:frontier} is a
diagnostic for the estimator, not a measurement of any dimension. For each $m$ we train the
algebra objective on the composite system and record the held-out $\hat J$, on a grid
$m=1$--$20$ at each coupling. Two things are worth reading off it. First, the absolute
scale is set by the critic architecture --- values rise by roughly $10\%$ per doubling of
the critic width --- so only the shape of each curve is identified, not its height, and no
comparison against the closed-form ceiling of \Cref{prop:exact-value} should be drawn from
it. Second, the value achievable at budget $m$ is non-decreasing in $m$, since adjoining a
coordinate can only enlarge $\Vphi$, so any decline past a plateau is estimation error
rather than signal. Growth stabilizes on every curve, and past that point the curves move
by no more than the spread between optimizer restarts at a single $m$, so what remains
there is run-to-run scatter rather than a trend. Where growth stabilizes moves with the
coupling, later at stronger coupling. The plateau itself falls as $\kappa$ rises, the
criterion being a property of the coupling, and coupling spending part of it.

\begin{figure}[t]
\centering
\includegraphics[width=\figwidth]{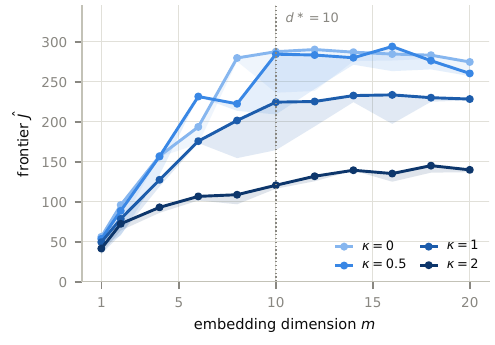}
\caption{Convergence of the objective with the coordinate budget, on the composite system
of \Cref{sec:experiments}. Held-out $\hat J$ against embedding dimension $m$ at each
coupling, seed spread shaded. The vertical scale is set by the critic architecture and is
not comparable across architectures, so the curves identify shape and not height. Because
the achievable value is non-decreasing in $m$, any decline past a plateau measures the
estimator's error rather than a loss of signal. Growth stabilizes on every curve, later at
stronger coupling. This is an
instrument diagnostic: the flattening reports that our estimator stops gaining, not that
the coupling has a dimension.}
\label{fig:frontier}
\end{figure}

\section{Additional Literature Notes}
\label{sec:additional-lit}

\paragraph{$\chi^2$ / dependence-maximization methods.}
Maximizing a $\chi^2$ dependence to select features is an established program.
Squared-loss mutual information drives sufficient dimension reduction
\citep{suzuki2013sufficient} over a projection that is linear, static, and aimed at
an external target --- it recovers the central subspace and carries no dimension
notion. Principal inertia components \citep{calmon2017principal} and Soft-HGR
\citep{huang2019efficient} maximize the same dependence over $k$ feature pairs and
recover the top-$k$ modes, and the spectral contrastive loss
\citep{haochen2021provable} characterizes its optimum as the top eigenfunctions ---
in our vocabulary, span readings, which count components by their modes.
Our objective instead credits every function of the coordinates, so once a
component's generators are held, its products and powers are free.

\citet{schmitt2023information} share a principal aim with our approach --- reducing a
system by how predictive its variables are of the future --- but measure predictability by a
mutual information against a bit-rate budget, through an information bottleneck.
That objective is considerably harder to compute: solved exactly it needs the full
conditional law $p(x_{t+\Delta t}\cond x_t)$, which they note is difficult to estimate
in practice, so their variational form replaces it with a prior on the code and a
contrastive estimate of the predictive information.

A similar objective, likewise built on mutual information, appears in
\citet{federici2024latent}, who maximize the mutual information between the embedded
present and the embedded future, and estimate it contrastively as well. Their
bottleneck term is not directly computable either, and is replaced by a bound requiring
an explicit conditional density model.

Both therefore inherit the ceiling of the contrastive estimate. $I_{\mathrm{NCE}}$ is
bounded by $\log B$ at batch size $B$ \citep{oord2018representation}, so representing
$I$ nats of dependence requires $B\ge e^{I}$, and since the information in
multicomponent systems typically grows roughly linearly in the number of components
(information is additive for independent components), the batch size required grows
exponentially in their number. Our objective carries no such bound: at any batch size
the held-out value is an unbiased estimate of the criterion at the fitted critic, and
the supremum over critics is $1+\chi^2$ exactly \eqref{eq:variational-body}.

\paragraph{Component identification.} The independent-Markov-decomposition line
\citep{hempel2021independent} and its deep successor \citep{mardt2022deep} build Markov
state models of biomolecular complexes. As discussed in \Cref{sec:intro}, the state
complexity of a product system grows with the number of components, and they approach
this by decomposing the system into weakly coupled subsystems.

\paragraph{Repeated eigendirections.} \citet{dsilva2018parsimonious} observe that a
diffusion-maps embedding returns coordinates parametrizing a direction that an earlier
coordinate already parametrizes --- \emph{repeated eigendirections}, their instance
being $\cos 2x$ against $\cos x$ --- which overstates the dimension of the data and
pushes a genuinely new direction below the repeats in the spectrum. These repeats are
exactly what the algebra objective credits for free: once $\cos x$ is held, every
function of it is available and \eqref{eq:objective} pays nothing further for
$\cos 2x$. In contrast to our approach, their solution is not to seek a better
embedding but to filter the one computed: each eigenvector is fitted from its
predecessors by a local linear regression, and is discarded if that fit succeeds.
Every kept direction must therefore first appear in the computed spectrum, and on the
system of \Cref{sec:experiments} the first fast component appears only past rank one
hundred --- a hundred eigenvectors and a hundred such tests to reach it. That count is
the product mode count, so it
grows exponentially in the number of slow components: the high-rank spectral
computation is paid in full and only the final dimension is reduced, where
\eqref{eq:objective} never forms a spectrum at all (\Cref{sec:lsif-app}).

\paragraph{Autoencoder methods.} Koopman autoencoders
\citep{lusch2018deep,takeishi2017learning} linearize the dynamics with a
finite-dimensional linear model. That requirement implies that a rank objective is the
optimal one, so here the rank restriction is structurally necessitated.
We require less: no model of the dynamics is fit at all, the embedding only generates the
algebra that \eqref{eq:objective} scores, and predictions are made from it afterwards by
regression.

Finally, we note that we use an autoencoder in \Cref{subsec:exp-warped}, for the purposes
of pre-training. This autoencoder does not model the \emph{dynamics}: it is trained to
reduce the dimension of present-only data, with no future and no lagged pairs, and is used
as a warm start for our objective. Adding the dynamic part, our objective itself, then
improves on it (\Cref{sec:aeonly-app}).

\section{Additional Experiments}

\subsection{The Warm Start Is Not the Advantage}
\label{sec:aeonly-app}

The few-label results of \Cref{subsec:exp-fewshot} use an algebra $\phi$ trained from
an autoencoder warm start (\Cref{subsec:exp-warped}), which raises the question of how
much of the advantage the unsupervised chart already supplies. We answer it by reading
out the chart itself, with no algebra training on top: the same architecture, the same
checkpoint the algebra runs start from, the same twenty paired label draws and the same
readout. At $300$ noisy labels on the warped observation, averaged over three seeds:

\begin{center}
\small
\begin{tabular}{lcc}
\hline
representation & cools & bits \\
\hline
direct regression              & $0.42$ & $0.030\pm0.038$ \\
autoencoder chart alone        & $0.49$ & $0.089\pm0.019$ \\
algebra $\phi$ from that chart & $0.78$ & $0.134\pm0.017$ \\
\hline
\end{tabular}
\end{center}

The chart alone carries real information about the fast components --- three times what
direct regression recovers --- so part of the gap to direct regression is due to
unsupervised pretraining rather than to the criterion. The remainder is due to the
criterion, and it is the larger part on the dominant components: training the algebra
objective from that chart adds $+0.29$ on the cools and $+0.045$ on the bits, winning
on all twenty paired draws in both cases (exact two-sided sign test,
$p=1.9\times10^{-6}$). The comparison is exact in the sense that the two rows differ
only in whether the objective was applied.

\begin{figure}[t]
\centering
\includegraphics[width=\figwidth]{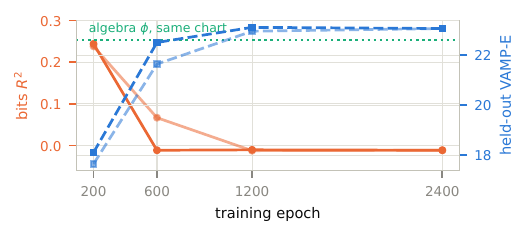}
\caption{Removal is the optimization. Initialized from the same unsupervised
chart, the span objective trains away readout-verified bit information (orange,
left axis) as its own held-out score improves (blue, right axis), while the
algebra objective from that chart retains it. Two restarts.}
\label{fig:warped}
\end{figure}

\subsection{Capacity and the Reachability Wall}
\label{sec:capacity-app}

An underfit span network does not build the interaction modes its own objective
calls for, and leaks bit information into the slots they vacate. Run at a $64^2$
trunk, the sweep of \Cref{fig:kfamily} shows the bits entering already at
$k\approx50$--$80$ --- a \emph{reachability} effect, not a sampling one, the span
estimator paying twice, once in rank and once in the capacity to compose roughly
one hundred interaction functions. That is why \Cref{fig:kfamily} is reported at
the largest trunks we train. This section records the per-mode evidence. At $\kappa=0$ the product eigenfunctions are
available in closed form, so we can measure, per mode, whether the trained
network can build them: a linear readout of each product mode from the trained
features shows the interaction frontier of a $64^2$ trunk frozen at low order
--- the cool-cool interaction $c_3{\times}c_3$ reads out at zero for every $k$
up to $120$, while at $256^2$--$512^2$ it is built at $60$--$80\%$ of its
ceiling. The sharpest control is a matched pair at essentially identical
singular value: the single-component mode $c_5{\times}1$ ($\sigma=0.838$) is
built by the $64^2$ trunk while the interaction $c_3{\times}c_2$
($\sigma=0.837$) is not --- the failure is composition, not spectral depth. The
same reading comes from the other end of the spectrum: the bits, at
$\sigma=0.55$ the lowest modes in play, are built well before interactions that
outrank them. The slots the unreachable interactions vacate go to the bits, and
the exchange is visible in both directions: down the $k$-axis at $64^2$ the
interaction frontier freezes while bits flood in between $k=50$ and $80$, and
up the width axis at fixed $k=80$ the interactions arrive while the bits drain
(bit $R^2$ $0.25\to-0.01$ from $64^2$ to $512^2$, which is $0.83\to-0.03$ of the
ceiling in the fraction-built units \Cref{fig:capacity} plots). Even at $512^2$ a small unbuildable
core remains (the deepest diagonal products), which is why admission in
\Cref{fig:kfamily} is gradual rather than sharp: the bits enter at the rank the
product count predicts, but reach only $0.19$ of their $0.26$ ceiling by
$k=120$, a fifth past the wall. The under-capacity and under-training artifacts
coincide numerically: at $\kappa=0$, $k=80$, the $512^2$ trunk reads $0.23$ at
$200$ epochs and decays to $-0.01$ by $2400$, the early value coinciding with
what a $64^2$ trunk reports at convergence.

\begin{figure}[t]
\centering
\includegraphics[width=\figwidth]{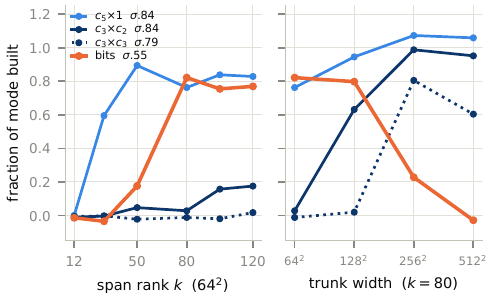}
\caption{The reachability wall, on both axes. Fraction of each mode built, as a
linear readout of that mode from the trained features ($R^2/\sigma^2$). Left,
against span rank at a $64^2$ trunk: the bits are built by $k=80$ even though
$\sigma=0.55$ makes them the \emph{lowest} modes plotted, while the interactions
above them stay unbuilt. Right, at fixed $k=80$ against trunk width: the
interactions arrive and the bits drain, the same slots changing hands.
$c_5{\times}1$ and $c_3{\times}c_2$ are a matched pair at $\sigma\approx0.84$,
built and unbuilt respectively, so what fails is composition rather than spectral
depth.}
\label{fig:capacity}
\end{figure}

\section{The Predictive Metric}
\label{sec:predictive-metric-app}

The objective of \Cref{sec:lsif-app} measures the \emph{size} of the singular
spectrum. This section concerns its \emph{geometry}: the predictive metric $\dS$
of \eqref{eq:diffusion-metric} and its embedding $\Jf$. Like the objective, $\dS$
is a functional of the density ratio $r$ of \eqref{eq:density-ratio} --- but
unlike the objective it \emph{is} the singular system.

The intrinsic dimension is built from $\dS$ but is not its dimension. By
\Cref{def:intrinsic-dimension}, $d^\ast$ is the box dimension of the \emph{joint}
embedding $J=(\Jf,\Jb)$ under the joint metric $\dJ$, which separates two states
whenever they differ in what they predict \emph{or} in what they postdict. The
forward half alone carries $\dS$ and has a dimension of its own, in general a
smaller one. \Cref{sec:dimension-app} assembles $\dJ$ from $\dS$ and its backward
counterpart and defines $d^\ast$ there.

The diffusion metric of
\eqref{eq:diffusion-metric} is the $\chi^2$ distance between one-step predictive
laws,
\begin{equation}
\label{eq:chi2-distance}
  \dS(x,y)^2
  = \int \frac{\big(p(z\cond x)-p(z\cond y)\big)^2}{p_{X'}(z)}\,dz
  = \norm{r(x,\cdot)-r(y,\cdot)}_{L^2(\mu')}^2
  = \sum_{j\ge1}\sigma_j^2\big(\phi_j(x)-\phi_j(y)\big)^2 .
\end{equation}
It admits three readings: the $\chi^2$ distance between the predictive laws
$p(\cdot\cond x)$ and $p(\cdot\cond y)$, the $L^2(\mu')$ distance between the rows
$r(x,\cdot)$ of the density ratio, and the weighted Euclidean distance in the
singular coordinates $\Jf(x)=(\sigma_j\phi_j(x))_j$. It is the
correspondence-analysis $\chi^2$ distance between the row profiles of the kernel
\citep{greenacre1984correspondence} and the diffusion
distance \citep{coifman2006diffusion} of the symmetrized round-trip operator
$TT^\ast$. Unlike the asymmetric $\chi^2(P\,\|\,Q)$, it is symmetric with the fixed
reference $\mu'$, and it vanishes exactly when $p(\cdot\cond x)=p(\cdot\cond y)$,
that is, when $x$ and $y$ are the same predictive state.

\paragraph{The kernel of $TT^\ast$.} Let $T^\ast:L^2(\mu)\to L^2(\mu')$ be the
adjoint of $T$, the backward conditional expectation
$(T^\ast g)(x')=\Exp{g(X)\cond X'=x'}$. The round-trip operator
$TT^\ast:L^2(\mu)\to L^2(\mu)$ --- step forward $x\mapsto x'\sim p(\cdot\cond x)$,
then back $x'\mapsto x''$ by the posterior $p(\cdot\cond x')$ --- is self-adjoint,
positive, and Markov ($TT^\ast\mathbf 1=\mathbf 1$), hence a reversible Markov
operator with stationary law $\mu$ and $TT^\ast\phi_j=\sigma_j^2\phi_j$. Its
integral kernel with respect to $\mu$, defined by
$(TT^\ast f)(x)=\int K(x,y)f(y)\,d\mu(y)$, is
\begin{equation}
\label{eq:tt-kernel}
  K(x,y)
  = \big\langle r(x,\cdot),\,r(y,\cdot)\big\rangle_{L^2(\mu')}
  = \int \frac{p(z\cond x)\,p(z\cond y)}{p_{X'}(z)}\,dz
  = 1+\sum_{j\ge1}\sigma_j^2\,\phi_j(x)\,\phi_j(y).
\end{equation}
This is the $\chi^2$ affinity between the two predictive laws, and its diagonal
$K(x,x)=1+\chi^2\big(p(\cdot\cond x)\,\|\,\mu'\big)$ measures how far the future of
$x$ departs from the average future $\mu'$. The predictive metric is exactly the
distance induced by this kernel,
\begin{equation}
\label{eq:kernel-distance}
  \dS(x,y)^2 = K(x,x)-2K(x,y)+K(y,y)
  = \norm{r(x,\cdot)-r(y,\cdot)}_{L^2(\mu')}^2 ,
\end{equation}
so $\dS$ is the diffusion distance \citep{coifman2006diffusion} of the symmetrized
operator $TT^\ast$ with $K$ as its kernel, at diffusion time $t=\tfrac12$ in the
convention $D_t^2=\sum_j\lambda_j^{2t}(\Delta\phi_j)^2$ with $\lambda_j=\sigma_j^2$.
Passing to $TT^\ast$ is what
lets the embedding machinery for self-adjoint operators apply to the non-reversible
coupling $T$. We avoid the term \emph{spectral dimension} for $d^\ast$: in the
Dirichlet-form literature that name is reserved for the exponent $d_s=2\alpha/\beta$
read off the on-diagonal heat-kernel decay $p_t(x,x)\asymp t^{-\alpha/\beta}$, a
short-time quantity that a metric taken at a \emph{fixed} lag cannot see.

\begin{remark}[The canonical RKHS of a coupling]
\label{rem:canonical-rkhs}
Equations~\eqref{eq:tt-kernel} and~\eqref{eq:kernel-distance} exhibit $\dS$ as a reproducing-kernel
distance. The feature map is $x\mapsto r(x,\cdot)=dp(\cdot\cond x)/d\mu'\in
L^2(\mu')$, its reproducing kernel is $K$, and
$\dS(x,y)=\norm{K(x,\cdot)-K(y,\cdot)}_{\mcH_K}$. The space is intrinsic to the
coupling: $\mcH_K=\operatorname{range}(T)$, the predictable observables
$f(x)=\Exp{g(X')\cond X=x}$, and $\norm{f}_{\mcH_K}$ is the $L^2(\mu')$ norm of the
minimal $g$ that predicts $f$, so less predictable directions (smaller $\sigma_j$)
carry more norm. In particular every predictable observable is $\dS$-Lipschitz: for
$f=Tg$, $|f(x)-f(y)|=\big|\langle g,\,r(x,\cdot)-r(y,\cdot)\rangle_{L^2(\mu')}\big|\le
\norm{g}_{L^2(\mu')}\,\dS(x,y)$, and the supremum over $\norm{g}_{L^2(\mu')}\le1$ gives
the dual form $\dS(x,y)=\sup\{\,|Tg(x)-Tg(y)|:\norm{g}_{L^2(\mu')}\le1\,\}$, an
integral-probability metric \citep{gretton2005measuring}. Thus $\dS$ is the smallest
metric in which $T$ smooths --- the coordinates $\sigma_j\phi_j$ are each $1$-Lipschitz
and $\Jf$ is an isometry onto $M_T$ \citep{coifman2006diffusion} --- and this
operator-analytic half of \Cref{thm:diffusion-equivalence} is unconditional, leaving all
of the reduction's difficulty in the geometry of $M_T$ rather than the action of $T$.
Unlike a method that fixes a kernel on the observation space,
nothing is chosen here: the feature $r(x,\cdot)$ is a Radon--Nikodym derivative,
hence invariant under a common bi-measurable reparametrization $x\mapsto h(x)$
(\Cref{lem:invariance}), and $K$ with it. Any characteristic kernel recovers the same
canonical system $(\sigma_j,\phi_j,\psi_j)$ after whitening
\citep{fukumizu2007statistical}, so whitening is exactly the passage from a chosen
kernel to $K$, the whitened linear instance being VAMP \citep{wu2020variational}. The
kernel $K$ itself is classical (the kernel of $TT^\ast$), and it is finite precisely
in the Hilbert--Schmidt regime,
$K(x,x)=1+\chi^2\big(p(\cdot\cond x)\,\|\,\mu'\big)<\infty$, degenerating in the
deterministic limit where $r(x,\cdot)$ leaves $L^2(\mu')$.
\end{remark}

\paragraph{Estimating the predictive metric.} The metric $\dS$ requires the
predictive laws $p(\cdot\cond x)$ or, equivalently, the leading singular functions
$\sigma_j\phi_j$. These are the estimands of nonlinear canonical analysis:
alternating conditional expectations \citep{breiman1985estimating}, functional
canonical variates \citep{buja1990remarks}, nonparametric canonical correlation
\citep{michaeli2016nonparametric}, kernel canonical correlation
\citep{bach2002kernel,fukumizu2007statistical}, and conditional mean embeddings or
kernel transfer operators \citep{song2009hilbert,klus2020eigendecompositions}. The
whitened linear case is the VAMP score \citep{wu2020variational}, and closed forms
are available when the singular functions are orthogonal polynomials
\citep{makur2016polynomial}. The reversible special case recovers the classical
diffusion map \citep{coifman2006diffusion}.

\paragraph{The kernel-imported metric.} The nearest prior construction of a reduced
predictive embedding, the reproducing-kernel $\epsilon$-machine
\citep{shalizi2001computational,brodu2020discovering}, metrizes predictive laws by a
maximum-mean-discrepancy \citep{gretton2005measuring} induced by a chosen kernel
rather than by $\dS$. That distance depends on the kernel and is not invariant under
a bi-measurable reparametrization of the observations, whereas $\dS$ is the distance
of the canonical kernel $K$ (\Cref{rem:canonical-rkhs}) --- an arbitrary-kernel proxy
in place of the coupling's own. Because box dimension is a metric --- not a
topological --- invariant, the two give different dimensions, and the intrinsic
dimension of \Cref{def:intrinsic-dimension} is the box dimension under the canonical
$\dS$.

\paragraph{The feature class carries a topology, and it can bias $d^\ast$.} The lower
bound survives only while every $\phi$ in the maximization is admissible, that is,
continuous in $\dJ$ (\Cref{def:intrinsic-dimension}). A network on
$\mcX\subseteq\RR^d$ instead enforces continuity in the \emph{native} metric, and the
two classes are in general incomparable. If the singular functions $\phi_j$ are
native-smooth, a native-continuous $\phi$ can still separate $\dS$-equivalent states
--- it is then $\dS$-discontinuous, and the value it attains is not one a
$\dS$-continuous map could certify. If instead the $\phi_j$ are
native-rough or discontinuous, as under a non-smooth observation of the state, the
admissible optimum is native-discontinuous and lies \emph{outside} the network class,
which approximates it by several smooth coordinates and \emph{over}-counts $d^\ast$.
Enlarging the class past admissibility therefore deflates the dimension below its
invariant value: the operative constraint is fidelity to $\dS$, not network capacity.

\paragraph{A shared limitation the algebra softens.} This native-continuity prior is
common to all neural transfer-operator estimators --- VAMPnets
\citep{mardt2018vampnets}, deep canonical correlation \citep{andrew2013deep}, Koopman
autoencoders \citep{lusch2018deep} --- none of which can represent a
native-discontinuous singular function, since the invariance resides in the operator
(the $\sigma_j$ and the $\phi_j$ as $L^2$ objects) and any network reaches it only
through the native topology. Relative to these span methods the algebra is the less
exposed: a mode method must carry every retained mode as a network output, including
the roughest high harmonics, whereas here the network represents only the generators,
and their products --- the harmonics --- are scored through the algebra rather than
represented. The network thus carries the fewest and smoothest functions, so the
native-smoothness bottleneck binds later than for span methods, and only once the
generators themselves become discontinuous.

\section{Invariance}
\label{sec:invariance-app}

Let $h:\mcX\to\tilde\mcX$ be measurable and injective with measurable inverse on its image,
and form the remeasured coupling $\tilde X=h(X)$, $\tilde X'=h(X')$, with $\tilde\mu=h_*\mu$,
induced $\tilde\mu'=h_*\mu'$, and operator $\tilde T=T_{\tilde X,\tilde X'}$. Pulling
functions back along $h$, $U_h:L^2(\tilde\mu)\to L^2(\mu)$, $(U_h u)(x)=u(h(x))$, is an
isometry because $\tilde\mu=h_*\mu$, and it is \emph{onto} because $h^{-1}$ is measurable on
$h(\mcX)$: any $v\in L^2(\mu)$ is $v=U_h(v\circ h^{-1})$. Hence $U_h$ is unitary, and
$U_{h'}$ is its analogue $L^2(\tilde\mu')\to L^2(\mu')$. Surjectivity of $h$ plays no role,
since $\tilde\mcX\setminus h(\mcX)$ is $\tilde\mu$-null. When $\mcX$ and $\tilde\mcX$ are
standard Borel the measurability of $h^{-1}$ is automatic, injectivity alone sufficing by
the Lusin--Suslin theorem \citep{kechris1995classical}. No surjectivity is required, so
observations that embed the state in a \emph{larger} space, as in the warped and
high-dimensional observations of \Cref{sec:experiments}, are covered. Injectivity is
exactly what makes $U_h$ unitary rather than merely isometric --- for a general measurable
$h$ one gets $\norm{\tilde T_{\tilde Y,\tilde Y'}}_{\HS}\le\norm{T_{Y,Y'}}_{\HS}$, the
compression inequality of \Cref{lem:induced-projection}.

\begin{lemma}[Invariance]
\label{lem:invariance}
For any $\tilde\phi:\tilde\mcX\to\RR^m$ put $\phi=\tilde\phi\circ h$, $Y=\phi(X)$, and
$\tilde Y=\tilde\phi(\tilde X)$. The induced operators are unitarily conjugate,
\begin{equation}
\label{eq:invariance}
  \tilde T_{\tilde Y,\tilde Y'} \;=\; U_h^{-1}\,T_{Y,Y'}\,U_{h'},
\end{equation}
so $\norm{\tilde T_{\tilde Y,\tilde Y'}}_{\HS}=\norm{T_{Y,Y'}}_{\HS}$: the criterion gives
$\tilde\phi$ on the relabeled coupling the same value it gives $\tilde\phi\circ h$ on the
original. The predictive metric is fixed by the singular system, so the same conjugation
carries $\dJ$ along.
\end{lemma}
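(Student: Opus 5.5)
The plan is to reduce the claim to a statement about transforming pulled-back subspaces. The unitary $U_h$ carries the algebra of $\tilde Y$ onto the algebra of $Y$, and the remaining steps are bookkeeping with \Cref{lem:induced-projection}.

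\emph{Step 1: identify the relabeled operator.} First I check that $\tilde T = U_h^{-1}\,T\,U_{h'}$ as maps $L^2(\tilde\mu')\to L^2(\tilde\mu)$. For $u\in L^2(\tilde\mu')$,
\[
(T U_{h'}u)(x)=\Exp{u(h(X'))\cond X=x}.
\]
Since $\sigma(\tilde X)=\sigma(h(X))=\sigma(X)$ by measurability of $h^{-1}$ on the image, this equals $\Exp{u(\tilde X')\cond \tilde X=h(x)}=(\tilde T u)(h(x))=(U_h\tilde T u)(x)$. The identification holds $\mu$-a.e., which is all that is needed in $L^2$.

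\emph{Step 2: transport the algebras.} Next I show that $U_h$ maps $\tilde V_{\tilde\phi}$, the $\sigma(\tilde Y)$-measurable functions in $L^2(\tilde\mu)$, onto $\Vphi$, the functions of $Y=\tilde\phi(h(X))$. By the Doob factorization in the paper's footnote, $u=\rho\circ\tilde\phi$ pulls back to $\rho\circ\tilde\phi\circ h=\rho\circ\phi$. Conversely, $\rho\circ\phi=U_h(\rho\circ\tilde\phi)$, so the map is onto. The same argument gives $U_{h'}\tilde V'_{\tilde\phi}=\Vphip$. Because $U_h$ is unitary and maps the subspace onto the subspace, it intertwines the orthogonal projections: $P_{\Vphi}=U_h P_{\tilde V_{\tilde\phi}}U_h^{-1}$, and likewise on the future side.

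\emph{Step 3: combine.} By \Cref{lem:induced-projection} applied to each coupling,
\[
\tilde T_{\tilde Y,\tilde Y'}=P_{\tilde V_{\tilde\phi}}\tilde T P_{\tilde V'_{\tilde\phi}}
=U_h^{-1}P_{\Vphi}\,T\,P_{\Vphip}U_{h'}
=U_h^{-1}T_{Y,Y'}U_{h'}.
\]
Strictly, \Cref{lem:induced-projection} is stated on the joint space $H$. I will either run the argument there, with the unitary $(x,x')\mapsto(h(x),h(x'))$ pulling back $L^2(\tilde{\bar\mu})$ onto $H$, or note that the identity restricts to the marginal spaces. The Hilbert--Schmidt norm is invariant under unitary conjugation, which gives the norm equality.

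\emph{Step 4: the metric.} For the last claim, the conjugation maps the singular system $(\sigma_j,\phi_j,\psi_j)$ of $T$ to $(\sigma_j,\phi_j\circ h^{-1},\psi_j\circ h^{-1})$ for $\tilde T$. By \eqref{eq:chi2-distance} this gives $\tilde d_S(h(x),h(y))=\dS(x,y)$, and the backward half works identically, so $\dJ$ is carried along.

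The main obstacle I expect is the measure-theoretic care in Steps 1–2: everything must hold almost everywhere rather than pointwise. The point that needs the most attention is that $\sigma(h(X))=\sigma(X)$ modulo null sets, which is exactly where measurability of $h^{-1}$ on the image (or Lusin--Suslin) is used.
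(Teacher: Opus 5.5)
Your proposal is correct and follows the paper's proof essentially step for step. You identify $\tilde T=U_h^{-1}TU_{h'}$ by conditioning, show $U_h V_{\tilde\phi}=\Vphi$ via the factorization $\rho\circ\tilde\phi\mapsto\rho\circ\phi$, conjugate the projections, and substitute into the compression formula of \Cref{lem:induced-projection}; your explicit transport of the singular system to $(\sigma_j,\phi_j\circ h^{-1},\psi_j\circ h^{-1})$ for the metric claim, and your remark on the joint-space versus marginal-space form of that lemma, are slightly more careful than the paper's one-line treatment.
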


\begin{proof}[Proof of \Cref{lem:invariance}]
Conditioning on $\tilde X=\tilde x$ is conditioning on $X=h^{-1}(\tilde x)$, so for
$u\in L^2(\tilde\mu')$,
\begin{equation*}
  (\tilde T u)(\tilde x)
  =\Exp{u(h(X'))\cond X=h^{-1}(\tilde x)}
  =(U_h^{-1}\,T\,U_{h'}\,u)(\tilde x),
\end{equation*}
which is \eqref{eq:invariance} for the full algebras. For the restriction, $g$ is
$\sigma(\tilde\phi)$-measurable iff $g=\rho\circ\tilde\phi$ for some $\rho$, and then
$U_h g=\rho\circ\phi$ is $\sigma(\phi)$-measurable. Conversely every $\sigma(\phi)$-measurable
$\rho\circ\phi$ equals $U_h(\rho\circ\tilde\phi)$, so $U_h V_{\tilde\phi}=\Vphi$ --- this
step needs no injectivity, only $\phi=\tilde\phi\circ h$. Since $U_h$ is unitary, by fact
\textbf{(F1)} a unitary carries the projection onto a
subspace to the projection onto its image, $P_{V_{\tilde\phi}}=U_h^{-1}P_{\Vphi}U_h$ and
$P_{V'_{\tilde\phi}}=U_{h'}^{-1}P_{\Vphip}U_{h'}$. Substituting into
$\tilde T_{\tilde Y,\tilde Y'}=P_{V_{\tilde\phi}}\tilde T P_{V'_{\tilde\phi}}$,
\begin{equation*}
  \tilde T_{\tilde Y,\tilde Y'}
  =U_h^{-1}P_{\Vphi}U_h\cdot U_h^{-1}T U_{h'}\cdot U_{h'}^{-1}P_{\Vphip}U_{h'}
  =U_h^{-1}\,P_{\Vphi}T P_{\Vphip}\,U_{h'},
\end{equation*}
which is \eqref{eq:invariance}. Unitary conjugation preserves the Hilbert--Schmidt norm.
The predictive metric is a function of the operator's singular system, which the same
equivalence carries along, so $\dJ$-continuity is preserved. Finally
$\tilde\phi\mapsto\tilde\phi\circ h$ is a bijection between feature maps on $\tilde\mcX$ and
on $\mcX$, with inverse $\phi\mapsto\phi\circ h^{-1}$ --- here injectivity is used --- so the
two problems have the same achievable values at every budget.
\end{proof}

\begin{remark}[Invariance of the fitted pipeline]
\label{rem:homeomorphism-invariance}
\Cref{lem:invariance} is a statement about the criterion, not about any estimator. A fitted
$\phi$ is drawn from a class defined by regularity in the observed coordinates, and that
class is preserved by $h$ only when $h$ preserves the native topology. Asking $h$ to be a
\emph{homeomorphism} --- bi-continuous, not smooth --- suffices, and as a bi-measurable map
it still leaves $\dJ$ and $d^\ast$ fixed, so the estimator is equivariant and not only the
target. This is also the class on which a kernel fixed on the raw observations fails,
ambient smoothness being a metric and not a topological invariant.
\Cref{subsec:exp-warped} tests it.
\end{remark}

\section{What a Coordinate Buys}
\label{sec:budget-app}

Recall that the \emph{algebra objective} scores a map $\phi$ by the size of the operator of the
coupling $\phi$ induces, $\norm{T_{\phi}}_{\HS}^2=1+\chi^2(\phi(X);\phi(X'))$, which
is \eqref{eq:objective}. This section evaluates that score exactly in two settings
where it can be set against what a different criterion would choose from the same
budget. The first asks where a \emph{rank} budget goes on a system assembled from
independent components. The second asks the same of a \emph{reconstruction}
objective.

Both pose one question --- given one more coordinate, what is it spent on --- and
the two rivals answer it in the same way as each other. A rank method ranks the
candidates by singular value and buys the largest. A reconstruction objective ranks
directions by variance and keeps the largest. Each scores a candidate by a magnitude
the candidate carries \emph{on its own}, with no reference to what the coordinates
already held can express. The algebra objective \eqref{eq:objective} scores it instead by
what it \emph{adds}: a coordinate is worth something exactly insofar as it enlarges
the algebra $\sigma(\phi)$ in a direction carrying present--future dependence. So a
mode that is already a product of coordinates held is worth nothing however large
its singular value, and a direction that decorrelates within one step is worth
nothing however large its variance. The two computations below are those two
sentences, made exact.

\subsection*{Products are free, new components are not}

Assume $X=(X_1,\dots,X_m)$ has independent coordinates and the transition
factorizes, $p(x'\cond x)=\prod_i p_i(x_i'\cond x_i)$, so the components evolve
without interacting. Let $T_i$ be the operator of component $i$, with singular
values $1=\sigma_{i,0}>\sigma_{i,1}\ge\cdots$ and left singular functions
$\phi_{i,n}$, and write
\begin{equation}
\label{eq:factor-mass}
  g_i \;:=\; \norm{T_i}_{\HS}^2 \;=\; \sum_{n\ge 0}\sigma_{i,n}^2
\end{equation}
for its total spectral mass, every order included. Call a component
\emph{non-degenerate} if its leading singular function $\phi_{i,1}$ generates
$\sigma(X_i)$, which holds whenever $\phi_{i,1}$ is injective. Two elementary
components qualify. A Gaussian one --- $X_i\sim\mcN(0,1)$ with $(X_i,X_i')$ jointly
Gaussian of correlation $\rho_i$ --- has Hermite singular functions and
$\sigma_{i,n}=\rho_i^{\,n}$, so $g_i=1/(1-\rho_i^2)$. A two-state one ---
$X_i\in\{\pm1\}$ uniform, flipping with probability $(1-\lambda_i)/2$ --- has the
single nontrivial mode $\phi_{i,1}=x_i$ at $\sigma_{i,1}=\lambda_i$, so
$g_i=1+\lambda_i^2$. The second is the idealization of the bits of
\Cref{sec:experiments}.

\begin{proposition}
\label{prop:exact-value}
Suppose every component is non-degenerate, and let $\phi$ select a subset
$I\subseteq[m]$ of them through their generating coordinates $(x_i)_{i\in I}$. Then
\begin{equation}
\label{eq:exact-value}
  \norm{T_{\phi}}_{\HS}^2 \;=\; \prod_{i\in I} g_i ,
  \qquad\text{i.e.}\qquad
  \chi^2\big(\phi(X);\phi(X')\big) \;=\; \prod_{i\in I} g_i \;-\; 1 .
\end{equation}
Consequently: (1) selecting every component gives the unconditional maximum
$\norm{T}_{\HS}^2=\prod_i g_i$; (2) \emph{products are free} --- adjoining a product
$\phi_{i,1}\phi_{j,1}$ of two already-selected components leaves the value
unchanged, so a coordinate spent on it buys nothing; (3) when $\sigma_{\ell,1}>0$,
spending that same coordinate on an unselected component $\ell\notin I$ raises the
value by $\big(\prod_{i\in I}g_i\big)(g_\ell-1)>0$.
\end{proposition}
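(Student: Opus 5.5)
The plan is to reduce everything to the tensor-product structure of the independent coupling and the fact that the value depends on $\phi$ only through $\sigma(\phi)$. By non-degeneracy, $\sigma(\phi_{i,1}(X_i))=\sigma(X_i)$, so for the selection $\phi=(x_i)_{i\in I}$ (or equivalently the leading singular functions) the generated algebra is $\sigma(X_I)$ with $X_I=(X_i)_{i\in I}$. The value $\norm{T_\phi}_{\HS}^2$ is then the Hilbert--Schmidt norm of the transfer operator of the sub-coupling $(X_I,X_I')$. I would establish this first, using the footnote characterization of $\sigma(\phi)$-measurable functions and \Cref{lem:induced-projection}.

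Next I would compute that sub-coupling's operator. Independence of the coordinates under $\mu$, together with the factorized kernel, makes the joint law of $(X_I,X_I')$ the product $\bigotimes_{i\in I}$ of the component joint laws. It also makes the marginals $\mu_I=\bigotimes\mu_i$ and $\mu'_I=\bigotimes\mu_i'$. Hence the density ratio factorizes as $r_I(x_I,x_I')=\prod_{i\in I}r_i(x_i,x_i')$. By \eqref{eq:hs-chi2-prelim} and Fubini,
\[
\norm{T_\phi}_{\HS}^2=\int r_I^2\,d(\mu_I\otimes\mu_I')=\prod_{i\in I}\int r_i^2\,d(\mu_i\otimes\mu_i')=\prod_{i\in I}g_i .
\]
Equivalently, $T_{X_I,X_I'}=\bigotimes_{i\in I}T_i$, whose singular values are the products $\prod_i\sigma_{i,n_i}$; summing their squares gives the same product. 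The one point needing care is that the future marginal is also a product, which follows since each factor $p_i$ acts only on $x_i$. This gives \eqref{eq:exact-value}, and $\chi^2=\prod g_i-1$ follows by \Cref{lem:chi2-mi}.

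The consequences then follow quickly. For (1), take $I=[m]$: the generated algebra is $\sigma(X)$, so $T_\phi=T$ and its value is $\prod_i g_i$. For (2), adjoining $\phi_{i,1}\phi_{j,1}$ with $i,j\in I$ adds a $\sigma(X_I)$-measurable function, so the algebra is unchanged and the value is unchanged, because the criterion depends only on $\sigma(\phi)$. For (3), adjoining $\phi_{\ell,1}$ enlarges the algebra to $\sigma(X_{I\cup\{\ell\}})$ by non-degeneracy, so the value becomes $\prod_{I}g_i\cdot g_\ell$. The increment is $(\prod_I g_i)(g_\ell-1)$, which is positive because $g_\ell-1\ge\sigma_{\ell,1}^2>0$ and each $g_i\ge1$.

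The main obstacle is the first step: identifying the algebra generated by the selecting coordinates with $\sigma(X_I)$ on both the present and the future side. This requires non-degeneracy for $\mu$ and $\mu'$ alike, and in the non-stationary case the future-side generation has to be stated carefully. I would handle it by using the raw coordinates $x_i$ themselves as the generators, so that $\sigma(\phi(X'))=\sigma(X'_I)$ trivially, and invoking non-degeneracy only in (3) to equate $\sigma(\phi_{\ell,1})$ with $\sigma(X_\ell)$.
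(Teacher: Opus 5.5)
Your proposal is correct and follows essentially the same route as the paper: reduce to $\sigma\big((X_i)_{i\in I}\big)$ via non-degeneracy, observe that the induced pair is a product coupling so its density ratio factorizes, and apply \eqref{eq:hs-chi2-prelim} with Fubini, after which (1)--(3) follow because the criterion depends on $\phi$ only through its generated algebra. Your additional remarks are consistent refinements of points the paper leaves implicit: the tensor-product reading of the singular values, the explicit bound $g_\ell-1\ge\sigma_{\ell,1}^2>0$, and the care about future-side generation when $\mu'\neq\mu$.
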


\begin{proof}
The algebra objective sees $\phi$ only through the $\sigma$-algebra it generates, and
non-degeneracy gives $\sigma(\phi)=\sigma\big((X_i)_{i\in I}\big)$, so we may take
$\phi(x)=(x_i)_{i\in I}$. The induced pair
$(Y,Y')=\big((X_i)_{i\in I},(X_i')_{i\in I}\big)$ is then a product coupling in its
own right: its joint law is $\bigotimes_{i\in I}\mathrm{law}(X_i,X_i')$ and its two
marginals are the corresponding products, so its density ratio against the product
of its marginals factorizes, $r_Y(y,y')=\prod_{i\in I}r_i(y_i,y_i')$ with
$r_i=dp_i(\cdot\cond x_i)/d\mu_i'$. By \eqref{eq:hs-chi2-prelim} applied to $(Y,Y')$
and Fubini,
\begin{equation*}
  \norm{T_{\phi}}_{\HS}^2=\int r_Y^2\,d(\mu_Y\otimes\mu_Y')
  =\prod_{i\in I}\int r_i^2\,d(\mu_i\otimes\mu_i')
  =\prod_{i\in I}\norm{T_i}_{\HS}^2=\prod_{i\in I} g_i .
\end{equation*}
Claim (1) is $I=[m]$. For (2), $\phi_{i,1}\phi_{j,1}$ is a function of $(x_i,x_j)$,
so adjoining it leaves $\sigma(\phi)$, hence the value, unchanged. For (3),
adjoining component $\ell$ replaces $I$ by $I\cup\{\ell\}$ and multiplies the value
by $g_\ell$.
\end{proof}

Only the law of the induced pair enters, so no compression of $T$ and no singular
system of the joint operator is needed --- the product structure of $(Y,Y')$ does
all the work. The singular values appear only to name $g_i$ in
\eqref{eq:factor-mass} and to identify which coordinate a rank method would pick.

\paragraph{What the two claims mean.} Claims (2) and (3) are one coordinate spent
two ways. Suppose $\phi=(x_1,x_2)$ and one further coordinate is available. A
candidate for it is the product $\phi_{1,1}\phi_{2,1}$ --- not an arbitrary choice,
but a genuine singular function of the joint operator, of singular value
$\lambda_1\lambda_2$, and when $\lambda_1\lambda_2>\lambda_3$ it is the
\emph{third-largest mode of the entire system}, so a rank-$3$ method returns exactly
$x_1,x_2,x_1x_2$. Yet that product is already a function of $(x_1,x_2)$. The
generated algebra does not grow, $I$ is still $\{1,2\}$, and the value is still
$g_1g_2$: the coordinate buys nothing whatsoever. Spent on $x_3$ instead it buys
$g_1g_2(g_3-1)>0$, strictly positive as soon as the third component has any
nontrivial mode. That is the double meaning of \emph{products are free}. The
algebra objective credits every product of the coordinates already held, so a coordinate
need never be spent on one --- and therefore a coordinate spent on one is spent on
something already owned.

\paragraph{The smallest instance, and the one we run.} Take three two-state
components with $\lambda_1\lambda_2>\lambda_3$. The rank-$3$ optimum is
$\{x_1,x_2,x_1x_2\}$, whose algebra is $\sigma(X_1,X_2)$ and therefore contains no
non-constant function of $X_3$ at all: the third component is not modeled coarsely,
it is absent. The algebra objective selects $\{x_1,x_2,x_3\}$ and exceeds the rank-$3$ value
by $(1+\lambda_1^2)(1+\lambda_2^2)\lambda_3^2$. When $\lambda_3>\lambda_1\lambda_2$
the two selections coincide. So the mode ordering decides whether a rank budget goes
wrong, and never affects what \eqref{eq:objective} prefers. This is linear masking
at $m=3$, exactly. The system of \Cref{sec:experiments} is the same phenomenon with
eight components, a budget near one hundred, and measured rather than closed-form
quantities --- a fuller demonstration, though not a sharper one than this.

\begin{remark}
A feature violating non-degeneracy --- for instance $\phi=x_i^2$, which discards the
sign of $x_i$ and so captures only the even-order modes of component $i$ --- can
only lower the value, never exceed the all-component maximum of claim (1).
\end{remark}

\subsection*{Autoencoder Reconstruction vs Dynamic Objective}

An autoencoder model of the dynamics seeks an encoder $E:\mcX\to\RR^k$, a decoder
$D:\RR^k\to\mcX$, and a latent map $M$ --- linear in the Koopman autoencoders of
\citet{lusch2018deep} --- with
\begin{equation}
\label{eq:autoencoder}
  D\circ E(x)\approx x, \qquad D\circ M\circ E(x)\approx x' .
\end{equation}
The reconstruction requirement makes \eqref{eq:autoencoder} allocate the budget $k$
by \emph{variance} rather than by dynamics: minimizing reconstruction error rewards
$E$ for keeping the directions of $x$ that are most costly to regenerate, the
high-variance ones, whether or not they carry any dynamical information. Write
$x=x_1\oplus x_2$ with $x_1$ high variance but decorrelating in one step --- a
per-step nuisance --- and the predictable dynamics in a lower-variance $x_2$.
Reconstruction spends coordinates on $x_1$, since dropping it dominates the error,
whereas \eqref{eq:objective} spends none, since $x_1$ contributes no dependence
between $X$ and $X'$. In the factorized setting of \Cref{prop:exact-value} this is
exact: a per-step nuisance factor has $g_i=1$ and zero selection gain in claim (3),
at any variance. (A \emph{static} attribute --- $x_1$ frozen, $x_1'=x_1$ --- is the
opposite case: it is a $\sigma=1$ mode that \eqref{eq:objective} does retain, being
perfectly coupled across time.) Reconstruction methods are in this sense a
nonlinear analogue of PCA --- a linear autoencoder \emph{is} PCA --- selecting
directions by variance, where \eqref{eq:objective} selects by predictability.

This is a property of the loss, not of the decoder, and dropping reconstruction
does not repair it. Keeping only $D\circ M\circ E(x)\approx x'$ --- modeling
$x'\cond x$ --- constrains the encoder only on the present support; under a
non-stationary coupling $\mu'\neq\mu$ the future occupies a different region, so
iterating the model applies $E$ off the support it was fit on. The role of the
latent also becomes unclear: a conditional generative model of $x'\cond x$ places
its latent on the transition noise --- the variation in $x'$ not determined by $x$
--- which is not a representation of the state at all. Recovering a representation
requires bottlenecking $x$, which is the one-sided predictive-sufficiency objective
once more, now committed to a likelihood we do not need.

Finally, even at its optimum the autoencoder recovers only the conditional mean:
trained under an $L_2$ loss, $D\circ M\circ E$ converges to $\Exp{X'\cond X=x}$, a
single predicted next state, and says nothing about the spread of $x'$ around it.
The operator $T$ instead encodes $\Exp{f(X')\cond X}$ for \emph{every} observable
$f$, and that family determines the full conditional law. The objective
\eqref{eq:objective} carries no decoder and never reconstructs $x$.

\section{The Predictive Metrics and the Intrinsic Dimension}
\label{sec:dimension-app}

This section carries the construction that \Cref{thm:diffusion-equivalence} rests on.
The singular system of $T$ puts two metrics on the state space, one forward and one
backward. Stacking their embeddings gives the joint embedding whose box dimension is
the intrinsic dimension $d^\ast$ of \Cref{def:intrinsic-dimension}. We then exhibit a
class of systems on which $d^\ast$ is finite and behaves as expected
(\Cref{prop:dstar-bound}), which is what makes the budget of
\Cref{thm:diffusion-equivalence} a finite one.

\paragraph{The predictive metrics.} The singular system puts a metric on the state
space. The \emph{forward} predictive distance between two states is the $\chi^2$
distance\footnote{The symbol $\chi^2$ carries two meanings. Earlier,
$\chi^2(X;X')$ in \eqref{eq:hs-chi2-prelim} is a \emph{dependence} between the pair
$(X,X')$, comparing their joint law with the product of the marginals. Here
$\chi^2\big(p(\cdot\cond x)\,\|\,p(\cdot\cond y)\big)$ is a \emph{divergence} between two
fixed distributions, the predictive laws of $x$ and $y$. It is taken against the reference
$\mu'$ rather than the second argument, which makes it symmetric --- the $\chi^2$ distance
of correspondence analysis \citep{greenacre1984correspondence}.} between the futures they
predict,
\begin{equation}
\label{eq:diffusion-metric}
\begin{aligned}
  \dS(x,y)^2
  &=\chi^2\!\big(p(\cdot\cond x)\,\big\|\,p(\cdot\cond y)\big)\\
  &=\norm{r(x,\cdot)-r(y,\cdot)}_{L^2(\mu')}^2\\
  &=\sum_{j\ge1}\sigma_j^2\big(\phi_j(x)-\phi_j(y)\big)^2,
\end{aligned}
\end{equation}
the weighted Euclidean distance in the left singular coordinates, vanishing exactly when
$x$ and $y$ predict the same future. Its time-reversed counterpart, the \emph{backward}
distance, is built identically from the right singular functions and the backward kernel
$q(\cdot\cond x)$ of $T^\ast$ (the previous state given the current),
\begin{equation}
\label{eq:backward-metric}
\begin{aligned}
  \dSb(x,y)^2
  &=\chi^2\!\big(q(\cdot\cond x)\,\big\|\,q(\cdot\cond y)\big)\\
  &=\sum_{j\ge1}\sigma_j^2\big(\psi_j(x)-\psi_j(y)\big)^2 .
\end{aligned}
\end{equation}
Both are coordinate-free: because $r$ is a Radon--Nikodym derivative, $\dS$ and $\dSb$ are
unchanged when the state is remeasured by any bi-measurable $h$.

\paragraph{The joint embedding and the intrinsic dimension.} The forward and backward maps
$\Jf(x)=(\sigma_j\phi_j(x))_{j\ge1}$ and $\Jb(x)=(\sigma_j\psi_j(x))_{j\ge1}$ embed the
state into $\ell^2$ as the predictive manifolds $M_T=\Jf(\mcX)$ and
$M_{T^\ast}=\Jb(\mcX)$, carrying $\dS$ and $\dSb$ \citep{coifman2006diffusion}. Stacking
them gives the \emph{joint} (bidirectional) embedding $J=(\Jf,\Jb)$, an isometry from
$(\mcX,\dJ)$ onto $J(\mcX)\subseteq\ell^2\oplus\ell^2$ under
\begin{equation}
\label{eq:joint-metric}
  \dJ(x,y)^2=\dS(x,y)^2+\dSb(x,y)^2 ,
\end{equation}
which separates two states whenever they differ in what they predict or in what they
postdict.

\begin{definition}[Intrinsic dimension]
\label{def:intrinsic-dimension}
The \emph{intrinsic dimension} of the coupling is the upper box-counting dimension of the
joint embedding under the joint metric,
\begin{equation}
\label{eq:intrinsic-dimension}
  d^\ast := \overline{\dim}_{\mathrm{box}}\big(J(\mcX),\dJ\big),
\end{equation}
the exponent governing the growth as $\eps\to0$ of the number of $\dJ$-balls of radius
$\eps$ needed to cover $J(\mcX)$ \citep{falconer2014fractal}. We take the upper version
throughout, which is the one the covering bounds below and the embedding theorem require,
and we assume $J$ is defined at every state, that is $K(x,x)$ and $K^\ast(x,x)$ are finite
pointwise rather than merely almost everywhere --- a standing regularity condition, needed
because $\mu$ and $\mu'$ may differ and no single null set is exceptional for both.
\end{definition}

The next proposition exhibits a class of systems with bounded intrinsic dimension:
smooth, nondegenerate dynamics on a compact $d$-dimensional state space.

\begin{proposition}[A class of systems with bounded intrinsic dimension]
\label{prop:dstar-bound}
Let $(\mcX,\rho)$ be a compact metric space with upper box dimension $d$, carrying a
finite reference measure $dz$, and suppose the transition law has a density
$p(z\cond x)$ with respect to $dz$ such that, for constants $0<c\le C<\infty$ and
$L<\infty$,
\begin{itemize}
\item[(i)] $c\le p(z\cond x)\le C$ for all $x,z$, and
\item[(ii)] $\lvert p(z\cond x)-p(z\cond y)\rvert\le L\,\rho(x,y)$ and
$\lvert p(x\cond w)-p(y\cond w)\rvert\le L\,\rho(x,y)$ for all $z,w$ and $x,y$.
\end{itemize}
Then for every initial law $\mu$ on $\mcX$ the coupling is Hilbert--Schmidt and
$d^\ast\le d$. In particular, \Cref{thm:diffusion-equivalence} guarantees that
$\lfloor2d\rfloor+1$ coordinates always suffice for such a system.
\end{proposition}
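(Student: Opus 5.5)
The plan is to prove three things in turn: the coupling is Hilbert--Schmidt, the joint map $J=(\Jf,\Jb)$ is Lipschitz from $(\mcX,\rho)$ into $(\ell^2\oplus\ell^2,\dJ)$, and upper box dimension does not increase under Lipschitz images. Together these give $d^\ast=\overline{\dim}_{\mathrm{box}}(J(\mcX),\dJ)\le\overline{\dim}_{\mathrm{box}}(\mcX,\rho)=d$. Once that is in hand, \Cref{thm:diffusion-equivalence} with $m=\lfloor 2d\rfloor+1>2d\ge 2d^\ast$ gives the final sentence.

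\emph{Step 1: densities and Hilbert--Schmidt.} For any initial law $\mu$, the future marginal has density $p_{X'}(z)=\int p(z\cond x)\,d\mu(x)$, which lies in $[c,C]$ by (i). Hence $r(x,z)=p(z\cond x)/p_{X'}(z)\in[c/C,C/c]$ is bounded. Since $\mu\otimes\mu'$ is a probability measure, $\int r^2\,d(\mu\otimes\mu')\le (C/c)^2<\infty$, so $T$ is Hilbert--Schmidt by \eqref{eq:hs-chi2-prelim}. Boundedness also gives $K(x,x)<\infty$ at every point, so the pointwise standing condition of \Cref{def:intrinsic-dimension} holds on the forward side.

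\emph{Step 2: forward Lipschitz bound.} By \eqref{eq:chi2-distance},
\[
\dS(x,y)^2=\int \frac{(p(z\cond x)-p(z\cond y))^2}{p_{X'}(z)}\,dz\le \frac{L^2\,|dz|}{c}\,\rho(x,y)^2 ,
\]
where $|dz|$ is the finite total mass of the reference measure.

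\emph{Step 3: backward Lipschitz bound, the main obstacle.} The backward kernel is, by Bayes, $q(w\cond x)\,dw\propto p(x\cond w)\,d\mu(w)/p_{X'}(x)$, taken against $\mu$ rather than $dz$, since $\mu$ need not have a density. I would therefore work with the ratio form $\dSb(x,y)^2=\norm{r(\cdot,x)-r(\cdot,y)}_{L^2(\mu)}^2$, with $r(w,x)=p(x\cond w)/p_{X'}(x)$. Write
\[
r(w,x)-r(w,y)=\frac{p(x\cond w)-p(y\cond w)}{p_{X'}(x)}+p(y\cond w)\Big(\frac{1}{p_{X'}(x)}-\frac{1}{p_{X'}(y)}\Big).
\]
The first term is at most $L\rho(x,y)/c$ by the second half of (ii). For the second term, $|p_{X'}(x)-p_{X'}(y)|\le L\rho(x,y)$ follows by integrating that same bound against $\mu$, so the term is at most $C L\rho(x,y)/c^2$. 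This gives $\dSb\le L(1/c+C/c^2)\,\rho$ uniformly in $w$, for every $\mu$, with no density assumption on $\mu$. The care needed here is to use the $\mu$-weighted ratio form rather than a $dz$-density formula for $q$, which would be unavailable. Pointwise finiteness of $K^\ast(x,x)$ follows from the same bounds.

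\emph{Step 4: box dimension under Lipschitz maps.} Combining Steps 2 and 3, $\dJ\le \Lambda\rho$ for some constant $\Lambda$, so $J:(\mcX,\rho)\to (J(\mcX),\dJ)$ is $\Lambda$-Lipschitz. Any cover of $\mcX$ by $N$ $\rho$-balls of radius $\eps$ maps to a cover of $J(\mcX)$ by $N$ $\dJ$-balls of radius $\Lambda\eps$, so $N_{\dJ}(\Lambda\eps)\le N_\rho(\eps)$. Taking $\limsup \log N/\log(1/\eps)$, the constant $\Lambda$ disappears in the limit, and $d^\ast\le d$ (see \citet{falconer2014fractal}). Note also that $\dJ$ is a genuine metric on $J(\mcX)$ because $J$ is an isometry there, so the box count is well defined.
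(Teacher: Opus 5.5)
Your proposal is correct and follows the paper's proof essentially step for step. You use the same ingredients: the bound $r\le C/c$ for the Hilbert--Schmidt property, the forward estimate $\dS^2\le (L^2\lvert\mcX\rvert/c)\,\rho^2$, and the backward estimate via the $\mu$-density $p(x\cond w)/p_{X'}(x)$ with the same two-term split giving the constant $L(1/c+C/c^2)$. The conclusion likewise comes from upper box dimension not increasing under the Lipschitz map $J$; the only difference is that you spell out the covering argument, which the paper cites from Falconer.
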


\begin{proof}
Write $\lvert \mcX\rvert=\int dz$ for the total reference mass. The future marginal has
density $p_{X'}(z)=\int p(z\cond w)\,d\mu(w)\in[c,C]$, so the density ratio
$r(x,z)=p(z\cond x)/p_{X'}(z)$ is bounded by $C/c$. Hence
$\norm{T}_{\HS}^2=\int r^2\,d(\mu\otimes\mu')\le(C/c)^2<\infty$, the coupling is
Hilbert--Schmidt, and $K(x,x)$ and $K^\ast(x,x)$ are finite at every state, so the
standing regularity condition of \Cref{def:intrinsic-dimension} holds.

For the forward metric, \eqref{eq:chi2-distance} with (i) and (ii) gives
\begin{equation*}
\dS(x,y)^2=\int\frac{\big(p(z\cond x)-p(z\cond y)\big)^2}{p_{X'}(z)}\,dz
\;\le\;\frac{L^2\lvert \mcX\rvert}{c}\,\rho(x,y)^2 .
\end{equation*}
For the backward metric, the reversed kernel at the future state $x$ has density
$g_x(w)=p(x\cond w)/p_{X'}(x)$ with respect to $\mu$, and
$\lvert p_{X'}(x)-p_{X'}(y)\rvert\le\int\lvert p(x\cond w)-p(y\cond w)\rvert\,d\mu(w)\le L\rho(x,y)$,
so for every $w$
\begin{equation*}
\lvert g_x(w)-g_y(w)\rvert
\;\le\;\frac{\lvert p(x\cond w)-p(y\cond w)\rvert}{p_{X'}(x)}
+p(y\cond w)\,\frac{\lvert p_{X'}(y)-p_{X'}(x)\rvert}{p_{X'}(x)\,p_{X'}(y)}
\;\le\;L\Big(\frac1c+\frac{C}{c^2}\Big)\rho(x,y),
\end{equation*}
whence $\dSb(x,y)^2=\int(g_x-g_y)^2\,d\mu\le L^2(1/c+C/c^2)^2\rho(x,y)^2$. The two
bounds make the joint embedding $J:(\mcX,\rho)\to(J(\mcX),\dJ)$ Lipschitz and
surjective, and Lipschitz maps do not increase upper box dimension
\citep{falconer2014fractal}, so
$d^\ast=\overline{\dim}_{\mathrm{box}}(J(\mcX),\dJ)\le
\overline{\dim}_{\mathrm{box}}(\mcX,\rho)=d$.
\end{proof}

The hypotheses hold, for example, for a nondegenerate diffusion observed at a fixed
lag on a compact manifold: the transition density is then smooth and strictly
positive, hence bounded and Lipschitz in each argument.

\section{Proof of \Cref{thm:diffusion-equivalence}}
\label{sec:diffusion-dim-app}

Every object this proof consumes is constructed in \Cref{sec:dimension-app}: the
forward and backward predictive metrics $\dS$ and $\dSb$, the joint embedding
$J=(\Jf,\Jb)$ and the joint metric $\dJ$ it carries as an isometry, and the
intrinsic dimension $d^\ast=\overline{\dim}_{\mathrm{box}}\big(J(\mcX),\dJ\big)$ of
\Cref{def:intrinsic-dimension}. This section takes them as given and proves the
budget statement about them.

\begin{proof}[Sketch]
Finite box dimension forces total boundedness, so $J(\mcX)$ is a precompact subset of the
Hilbert space $\ell^2\oplus\ell^2$ of box dimension $d^\ast$, and passing to the closure
gives a compact set of the same dimension. For any integer $m>2d^\ast$, embedology
\citep{hunt1999regularity} --- the infinite-dimensional form of \citet{sauer1991embedology}
--- makes a prevalent, hence at least one, bounded linear $\pi:\ell^2\oplus\ell^2\to\RR^m$
injective on $J(\mcX)$ with continuous inverse. This is an existence statement, not an
explicit map. The composite $\phi=\pi\circ J$ is $\dJ$-Lipschitz with constant
$\norm{\pi}$, because $J$ is a $\dJ$-isometry. Injectivity recovers $J$ from $\phi$, so
every $\sigma_j\phi_j$ and $\sigma_j\psi_j$ is a function of $\phi$: the left singular
functions become functions of $\phi(X)$ and the right ones of $\phi(X')$, whence
$\norm{T_{\phi}}_{\HS}=\norm{T}_{\HS}$. The full argument follows.
\end{proof}

Write $\mcE=\ell^2\oplus\ell^2$ and recall the joint embedding
$J(x)=\big((\sigma_j\phi_j(x))_{j},(\sigma_j\psi_j(x))_{j}\big)\in\mcE$, which is an isometry
of $(\mcX,\dJ)$ onto $J(\mcX)$: $\norm{J(x)-J(y)}_{\mcE}^2=\dS(x,y)^2+\dSb(x,y)^2=\dJ(x,y)^2$.
The singular system enters only as a choice of coordinates here. By \eqref{eq:diffusion-metric}
the same isometry is $x\mapsto\big(r(x,\cdot),r^\ast(\cdot,x)\big)\in
L^2(\mu')\oplus L^2(\mu)$, built from the conditional laws alone, and
$(\sigma_j\phi_j(x))_j$ is its coordinate vector in the singular basis. Everything below may
be read in either presentation.\footnote{The argument can also be closed without the singular
system. Since $\pi$ is injective on the compact $M$ with continuous inverse and $J$ is an
isometry, $\phi$ is injective on $\mcX/\{\dJ=0\}$ with Borel inverse, so
$r(x,x')=H\big(\phi(x),\phi(x')\big)$ for a Borel $H$. The pair's density ratio is then
$\sigma(\phi(X))\otimes\sigma(\phi(X'))$-measurable, whence
$\chi^2(\phi(X);\phi(X'))=\chi^2(X;X')$, hence $\norm{T_{\phi}}_{\HS}^2=\norm{T}_{\HS}^2$ by
\eqref{eq:objective-as-divergence}.}
By hypothesis $d^\ast=\overline{\dim}_{\mathrm{box}}(J(\mcX),\dJ)<\infty$, and a set of
finite box dimension admits a finite $\eps$-cover for every $\eps>0$, hence is totally
bounded. Since $\mcE$ is complete, $J(\mcX)$ is therefore precompact, and we may
replace it by its closure $M\subseteq\mcE$, a compact set of the same box dimension.
(Hilbert--Schmidt decay alone would not give this: the $\phi_j$ are orthonormal in
$L^2(\mu)$, not uniformly bounded, so $\norm{J(x)}^2=\chi^2(p(\cdot\cond x)\|\mu')+
\chi^2(q(\cdot\cond x)\|\mu)$ need not be bounded over $x$.)

Fix an integer $m>2d^\ast$. By the embedding theorem for compact
sets of finite box dimension \citep{sauer1991embedology,hunt1999regularity}, a prevalent
--- hence nonempty --- set of bounded linear maps $\pi:\mcE\to\RR^m$ is injective on $M$
with H\"older-continuous inverse on $\pi(M)$. Fix such a $\pi$ and put $\phi=\pi\circ J$.

\emph{Admissibility.} For $x,y\in\mcX$,
\begin{equation*}
  |\phi(x)-\phi(y)|=\big|\pi\big(J(x)-J(y)\big)\big|
  \le\norm{\pi}\,\norm{J(x)-J(y)}_{\mcE}=\norm{\pi}\,\dJ(x,y),
\end{equation*}
so $\phi$ is $\dJ$-Lipschitz, in particular continuous in the joint metric.

\emph{$\phi$ resolves the singular system.} Since $\pi$ is injective on $M\supseteq J(\mcX)$,
$J(x)=(\pi|_M)^{-1}(\phi(x))$, so every coordinate of $J$ --- each $\sigma_j\phi_j$ and each
$\sigma_j\psi_j$ --- is a continuous function of $\phi$. For $\sigma_j>0$ this makes $\phi_j$
measurable with respect to $\sigma(\phi(X))$, that is $\phi_j\in\Vphi$, and likewise
$\psi_j\in\Vphip$.

\emph{The full norm is attained.} Writing the singular decomposition
$T=\sum_{j}\sigma_j\inner{\psi_j}{\cdot}_{L^2(\mu')}\phi_j$ and using
$P_{\Vphi}\phi_j=\phi_j$, $P_{\Vphip}\psi_j=\psi_j$ for every $j$ with $\sigma_j>0$,
\begin{equation*}
  P_{\Vphi}\,T\,P_{\Vphip}
  =\sum_{j}\sigma_j\inner{P_{\Vphip}\psi_j}{\cdot}P_{\Vphi}\phi_j
  =\sum_{j}\sigma_j\inner{\psi_j}{\cdot}\phi_j=T ,
\end{equation*}
so $\norm{T_{\phi}}_{\HS}=\norm{T}_{\HS}$. Conversely
$\norm{P_{\Vphi}TP_{\Vphip}}_{\HS}\le\norm{T}_{\HS}$ for every $\phi$, since orthogonal
projections are contractions, so this $\phi$ attains the largest value the criterion can
take, $\norm{T_{\phi}}_{\HS}^2=\norm{T}_{\HS}^2$, which is \eqref{eq:bridge-bracket}.
In particular $\lfloor2d^\ast\rfloor+1$ coordinates suffice.

\section{Experimental Details and Reproducibility}
\label{sec:repro}

\paragraph{Compute.} All experiments run CPU-only on a single Apple M1 laptop
(macOS), in Python 3.11 with PyTorch \citep{paszke2019pytorch}. Library versions
are pinned in \texttt{requirements.txt}.

\paragraph{Seeds.} Data seeds are fixed integer literals: $1$ for the $\phi$
training pairs, $2$ for the model-selection split, and $10/11/12$ for the readout
train/validation/test pools. Model-initialization seeds are derived formulas,
$311+7i$ over the $\phi$ restarts and $17+991\,r$ over the VAMPnet restarts, and
the warp maps carry their own fixed seeds ($7$, $11$, $99$). The dataset is
therefore identical across what we report as ``seeds'', so the seed spread
measures optimizer-initialization variability and not sampling variability. The
sole sampling-side error bar in the paper is the resampled label draws of
\Cref{subsec:exp-fewshot}. No torch determinism flags are set. Results are
float32 on CPU and machine-stable, but not guaranteed bitwise-portable across
BLAS builds.

\paragraph{Metrics.} Every recovery number is a per-component test $R^2$:
writing $y$ for one target coordinate over the held-out test pool, $\hat y$ for
its prediction from the learned features, and $\bar y$ for the mean of $y$ on
that pool, $R^2=1-\sum(y-\hat y)^2/\sum(y-\bar y)^2$, computed per coordinate
and averaged within a component (two coordinates for a cool, one for a bit). It
is not clipped, so a representation that predicts a component worse than that
component's own mean scores negative, and a component the representation has
dropped reads $0$ up to readout noise. Span-side selection uses VAMP-E
\citep{wu2020variational}, which at a rank-$k$ feature pair takes the value
$2\operatorname{tr}(U^\top C_{01}V)-\operatorname{tr}(U^\top C_{00}U\,V^\top
C_{11}V)$ with the whitening transforms $U,V$ fitted on the training split and
the feature covariances --- $C_{00}$ and $C_{11}$ instantaneous on the present and
future features, $C_{01}$ across the lag --- formed on a disjoint one, so it is an
unbiased estimate of a population score that overfitting can only depress.
\Cref{fig:capacity} reports the \emph{fraction built} of a singular mode,
$R^2/\sigma^2$, where $R^2$ is that of a linear readout of the mode's future
value from the present features and $\sigma^2$ is the population $R^2$ available
for a unit-variance singular function of singular value $\sigma$: one means a
fully represented mode, zero an absent one.

\paragraph{Hyperparameter searches.} Beyond the settings of record, the
following ranges were tried during development. Span: trunk widths $64^2$,
$128^2$, $256^2$, $512^2$ at ranks $k\in\{12,30,50,80,100,120\}$. Algebra:
embedding dimension $m\in\{1,2,4,6,8,10,12,14,16,18,20\}$, critic learning rate
$2\cdot10^{-4}$ to $2\cdot10^{-3}$ in four steps, with held-out $\hat J$
increasing across the range so the largest was kept, and critic weight decay
$0$ or $10^{-4}$, the latter never selected. On the warped observation we swept
the $\phi$ learning rate over $10^{-3}$, $3\cdot10^{-4}$, $10^{-4}$ both cold
(held-out $\hat J$ of $93$, $116$, $86$) and warm-started ($140$, $161$,
$173$), keeping $3\cdot10^{-4}$, and the $\phi$ trunk over $128^2$, $256^2$,
$512^2$, $128^3$, $128^4$, every widening or deepening of which lowered $\hat J$
($93$, $79$, $62$, $75$, $60$), so the warped $\phi$ stays at $128^2$. The
readout tunes learning rate $\{3\cdot10^{-3},10^{-2}\}$ against weight decay
$\{10^{-5},10^{-3}\}$ on its own validation split. Restarts and stopping epochs
were selected on held-out VAMP-E for the span and on held-out $\hat J$ for the
algebra, and no hyperparameter anywhere was selected on a test readout.

\end{document}